\documentclass[11pt]{article}

\usepackage[preprint]{acl}
\usepackage{times}
\usepackage{latexsym}
\usepackage[T1]{fontenc}
\usepackage[utf8]{inputenc}
\usepackage{microtype}
\usepackage{inconsolata}
\usepackage{graphicx}
\usepackage{booktabs} 
\usepackage{multirow}
\usepackage{placeins}

\usepackage{amsmath}
\usepackage{amssymb}
\usepackage{mathtools}
\usepackage{amsthm}

\usepackage[capitalize,noabbrev]{cleveref}

\theoremstyle{plain}
\newtheorem{theorem}{Theorem}
\newtheorem{proposition}[theorem]{Proposition}
\newtheorem{lemma}[theorem]{Lemma}

\theoremstyle{definition}
\newtheorem{definition}{Definition}
\newtheorem{assumption}{Assumption}
\theoremstyle{remark}
\newtheorem{remark}{Remark}
\usepackage{algorithm}
\usepackage{algorithmic} 
\newcommand{\Algphase}[1]{%
  \item[]\hrulefill%
  \item[]\textbf{#1}%
  \item[]%
}

\newcommand{\best}[1]{\textbf{#1}}
\usepackage{siunitx}
\usepackage{xcolor}
\title{ZOIF: Zeroth-Order Influence Function Estimation for LLM Data Selection Using Only Forward Passes}
\author{Lisong He\thanks{\scriptsize\texttt{helisong@eurasia.edu} (Lisong He)} \quad
Yunxuan Jiang\thanks{\scriptsize\texttt{fengjianliu@stu.xjtu.edu.cn} (Yunxuan Jiang)} \quad
Haishan Ye\thanks{\scriptsize\texttt{yehaishan@xjtu.edu.cn} (Haishan Ye)} \quad
Xiangyu Chang\thanks{\scriptsize\texttt{xiangyuchang@gmail.com} (Xiangyu Chang; corresponding author)}\\
School of Management, Xi'an Jiaotong University, Xi'an, China}

\begin{document}

\maketitle

\begin{abstract}
Influence functions enable principled data selection for large language models (e.g., identifying influential training samples for model editing, debugging, or efficient fine-tuning). 
However, scaling IFs to LLMs is hindered by the intractability of billion-scale Hessian matrices and the prohibitive memory overhead of backpropagation (BP). While many classical IF approximations reduce Hessian complexity, they still rely on BP for gradient computation, requiring massive storage for intermediate activations that impedes practical data selection at scale. 
To address this, we propose \texttt{ZOIF}, a zeroth-order framework for influence estimation that operates entirely via forward passes, enabling memory-efficient data selection under resource constraints. 
\texttt{ZOIF} decomposes influence estimation into three BP-free stages: (1) approximating the average proxy gradient via zeroth-order estimation; (2) reformulating and solving the inverse Hessian-vector product as an auxiliary zeroth-order optimization to bypass explicit inversion; and (3) deriving per-sample influence scores via a forward finite difference along the previously computed parameter update direction. 
Experiments show that \texttt{ZOIF} selects data subsets with downstream utility competitive with---and in some cases exceeding---the evaluated baselines, yet at substantially lower peak memory cost and with comparable overall computation time.
\end{abstract}

\section{Introduction}

As large language models (LLMs) continue to demonstrate remarkable performance across a wide range of tasks, the need for efficient, scalable, and principled data selection—such as for fine-tuning, debugging, and model editing—has become increasingly urgent.
Influence functions (IFs) originated in robust statistical diagnostics~\citep{Hampel1974} and were later adapted to modern machine learning~\citep{koh2017understanding}. They provide a principled framework for quantifying how individual training samples influence a model’s predictions on test points, thereby offering a theoretically grounded foundation for data attribution and selection~\citep{grosse2023studying,choe2024data,pan2024gdig,xia2024less,liu2021influence}.

Formally, IFs quantify the effect of an infinitesimal upweighting of a training sample on a target loss by leveraging a local second-order approximation of the empirical risk.
Given a pretrained model with parameters $\boldsymbol{\theta}_0$, we consider the following local Hessian-preconditioned influence score, which coincides with the classical influence function when $\boldsymbol{\theta}_0$ is a stationary solution of the reference objective:
\begin{equation}\label{eq:IF}
\mathcal{I}\left(z_i, z_j^{\prime}\right) = -\nabla_{\boldsymbol{\theta}} \ell(\boldsymbol{\theta}_0, z_i)^\top \boldsymbol{H}_{\boldsymbol{\theta}_0}^{-1} \nabla_{\boldsymbol{\theta}} \ell(\boldsymbol{\theta}_0,z'_j),
\end{equation}
where $\ell$ is the loss function and  $\boldsymbol{H}_{\boldsymbol{\theta}_0}$ denotes the empirical Hessian of the training objective evaluated at $\boldsymbol{\theta}_0$.
This formulation reveals that influence estimation fundamentally requires solving an inverse Hessian--vector product (IHVP), $\boldsymbol{H}_{\boldsymbol{\theta}_0}^{-1} \nabla_{\boldsymbol{\theta}} \ell(\boldsymbol{\theta}_0,z'_j)$, which constitutes the primary computational and memory bottleneck when scaling influence functions to modern LLMs.
To mitigate this challenge, recent studies have proposed empirical-Fisher or structured-curvature approximations for influence computation~\citep{Kwonetal2023,grosse2023studying}, low-dimensional parameter or curvature subspaces~\citep{schioppa2022scaling}, and random or structured projections of gradient features~\citep{park2023trak,choe2024data}. Related second-order approximations have also been explored in model unlearning~\citep{zhang2024soul}.

However, most scalable approximations of classical IFs still rely on first-order (FO) gradients, namely
$\nabla_{\boldsymbol{\theta}} \ell\!\left(\boldsymbol{\theta}_0, z_i\right)$
and
$\nabla_{\boldsymbol{\theta}} \ell\!\left(\boldsymbol{\theta}_0, z_j^{\prime}\right)$, which necessitate backpropagation (BP) through the full computational graph.
This dependence incurs prohibitive memory overhead; for instance, computing gradients via BP for OPT-13B requires more than $12\times$ the memory footprint of a single forward pass~\citep{zhang2024revisiting}.
Under limited GPU memory budgets (common in real-world deployment and edge scenarios), such memory demands severely hinder the practical applicability of influence functions for large-scale data selection.

These limitations motivate the central research question of this work: \emph{How can we estimate IFs for LLMs in a memory-efficient manner, without relying on BP or explicit second-order information, while retaining their effectiveness for data selection?}

To address this question, we propose {\texttt{ZOIF}} (\textbf{Z}eroth-\textbf{O}rder \textbf{I}nfluence \textbf{F}unction Estimation), a novel framework for influence estimation that operates exclusively through forward passes of the model.
The key idea underlying \texttt{ZOIF} is to systematically replace all gradient and Hessian-dependent components in classical influence functions with ZO estimators, thereby eliminating the need for BP and the storage of intermediate activations.

At a high level, \texttt{ZOIF} decomposes influence estimation into three memory-efficient stages.
First, it approximates the average loss gradient over a small validation subset with ZO evaluations, reducing activation memory while trading it for additional forward passes.
Second, \texttt{ZOIF} casts the IHVP as the solution of an auxiliary optimization problem and solves it with forward-only ZO updates.
Finally, for each candidate training sample, \texttt{ZOIF} estimates its influence score via a forward finite difference along the previously computed parameter update direction.
As designed above, the entire \texttt{ZOIF} pipeline enables scalable, practical influence-based data selection for LLMs under strict memory constraints.
The framework of \texttt{ZOIF} is illustrated in Figure~\ref{fig:Framework}.

\begin{figure*}[t]
    \centering
    \includegraphics[width=0.92\textwidth]{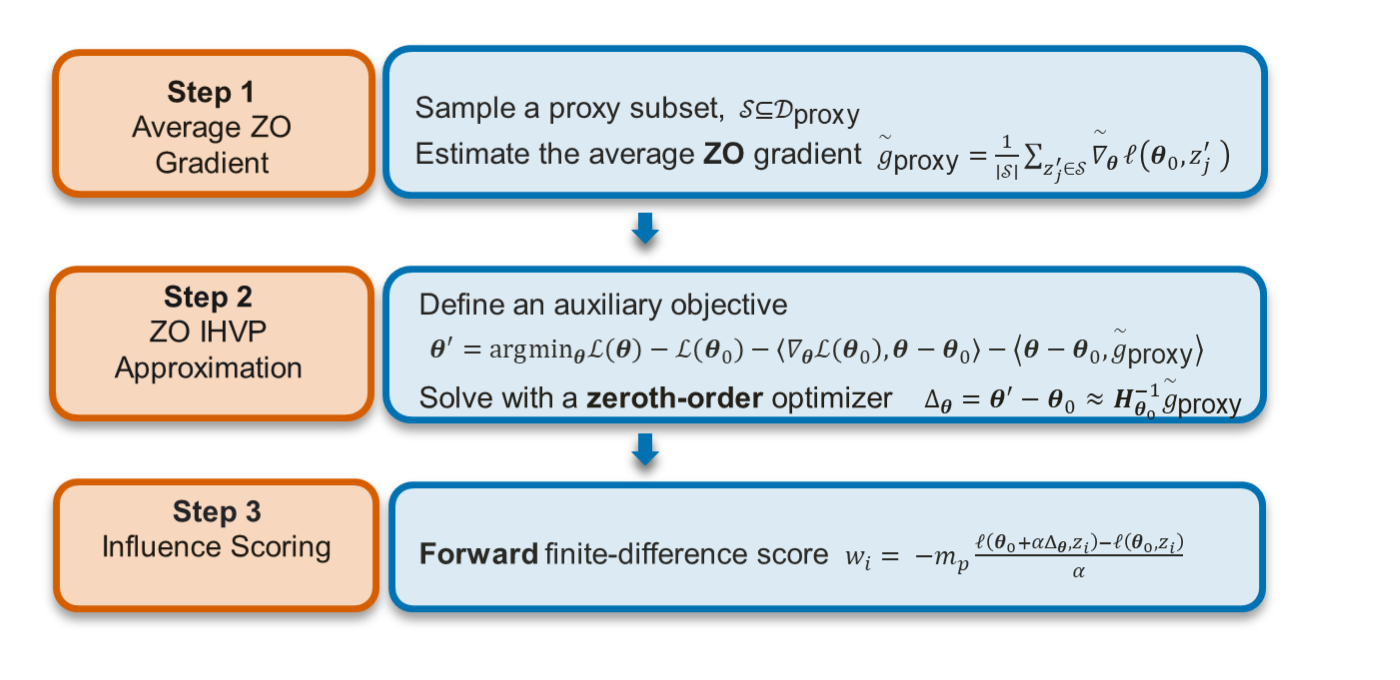}
    \caption{\texttt{ZOIF} framework and information flow across its three forward-only steps.}
    \label{fig:Framework}
\end{figure*}
The main contributions of this work can be summarized as follows:
\begin{itemize}
     \item  We introduce \texttt{ZOIF}, a novel ZO framework for IF estimation that operates exclusively via forward passes. 
    By systematically integrating ZO techniques, \texttt{ZOIF} bypasses BP and reduces the activation-memory burden of influence estimation for large-scale models.
    
    \item  We propose a principled approach to compute the IHVP by reformulating it as an auxiliary ZO optimization problem.
    This effectively eliminates the need for computationally expensive high-dimensional matrix inversions. Furthermore, our framework is designed to be highly compatible with various existing influence function approximation techniques, offering significant flexibility for practical deployment across different tasks and resource constraints.
    
    \item Experiments show that \texttt{ZOIF} selects data subsets with downstream utility competitive with---and in some cases exceeding---the evaluated baselines, while substantially lowering peak memory usage and maintaining comparable overall computation time.
\end{itemize}

\section{Related Work}\label{relatedwork}

During large-scale pretraining and fine-tuning, efficiently identifying high-value samples from massive data pools has become a critical challenge for improving model training efficiency and generalization capability. Existing data selection methods can be broadly categorized into three types: (1) heuristic approaches based on statistical or model-based metrics~\citep{yin2024compute, antonello2021selecting, marion2023when, li2024from}; (2) similarity-based methods leveraging semantic representations~\citep{ni2022large, lee2024nv, ivison2025large}; and (3) influence-based methods, including gradient-based and forward-only estimators~\citep{pruthi2020estimating,wu2024greats,nikdan2025efficient,ko2024mirrored,kokhlikyan2025z0inf,deng2026forvalue}. This paper focuses primarily on the third category and integrates ZO techniques to alleviate the memory bottleneck commonly encountered when applying IFs in large-scale models.

\subsection{Approximate Computation of IFs}

Scaling IFs to LLMs is primarily limited by Hessian computation, motivating several approximation strategies.
One prominent line of research simplifies or discards second-order information to reduce complexity. For instance, TracIn~\citep{pruthi2020estimating} approximates influence values using inner products of first-order gradients and omits the Hessian term. Influence Distillation retains second-order information when deriving sample weights and introduces a landmark-based approximation for scalable data selection~\citep{nikdan2025efficient}. DataInf employs a damped, layer-wise empirical-Fisher approximation with an efficient closed-form inverse approximation~\citep{Kwonetal2023}, whereas SOUL uses diagonal second-order information in the related setting of model unlearning~\citep{zhang2024soul}.
Another strategy compresses the parameter, curvature, or gradient representation used for influence analysis. Existing methods use low-dimensional parameter or curvature subspaces, structured curvature factorizations, random projections of gradient features, or structured gradient projections~\citep{schioppa2022scaling,grosse2023studying,park2023trak,choe2024data}.
Another line of research focuses on efficiently approximating the IHVP. Classical iterative methods such as conjugate gradient (CG) and LISSA~\citep{agarwal2017second} use repeated Hessian--vector product (HVP) evaluations without explicitly constructing the Hessian. DAUNCE expresses the same local second-order structure as an auxiliary optimization problem~\citep{pan2025daunce}, which motivates the formulation used in Step~2.

\subsection{Zeroth-Order Influence Estimation}
ZO estimation approximates gradients from forward finite differences along random parameter-space perturbations, avoiding the BP graph and reducing activation memory for resource-constrained LLM fine-tuning~\citep{Flaxman2004Online,Nesterov2017Random,malladi2023mezo}.

Recent forward-only estimators differ in signal and target. Mirrored Influence retains gradients for a small test/proxy set and scores training samples from forward losses~\citep{ko2024mirrored}; Z0-Inf uses checkpoint-trajectory loss changes~\citep{kokhlikyan2025z0inf}; and For-Value forms a closed-form valuation from final-layer representations and prediction errors~\citep{deng2026forvalue}. \texttt{ZOIF} instead decomposes the classical local Hessian-preconditioned IF into three components that admit ZO estimation: it estimates the proxy gradient with ZO perturbations, obtains the IHVP direction through ZO-based auxiliary optimization, and evaluates each candidate by a forward finite difference along the resulting shared direction. Thus, despite forward-only candidate evaluation, these methods differ in estimand, required information, and computation.

\section{Background and Problem Setup}\label{problem}

\subsection{Data Selection for LLMs}
\label{sec:data_selection}

Let an LLM be characterized by a parameter vector \(\boldsymbol{\theta} \in \mathbb{R}^d\), where \(d\) denotes the total number of trainable parameters. 
Given a data point \(z = (x, y) \in \mathcal{X} \times \mathcal{Y}\), let \(\ell(\boldsymbol{\theta}, z)\) denote the loss incurred by the model with parameters \(\boldsymbol{\theta}\) on \(z\).

Given a candidate data pool $\mathcal{D}_{\text{pool}} = \{z_i = (x_i, y_i)\}_{i=1}^n \subset \mathcal{X} \times \mathcal{Y}$ and an independent validation dataset $\mathcal{D}_{\text{val}} = \{z_j' = (x_j', y_j')\}_{j=1}^m \subset \mathcal{X} \times \mathcal{Y}$, we use $\mathcal{D}_{\text{val}}$ to evaluate the model's generalization performance on the target task.
To avoid overfitting to $\mathcal{D}_{\text{val}}$ during the data selection process, it is partitioned into two mutually exclusive subsets:
a proxy set $\mathcal{D}_{\text{proxy}} \subset \mathcal{D}_{\text{val}}$, used to guide data selection, with $|\mathcal{D}_{\text{proxy}}| = m_p \ll m$;
and a hold-out test set $\mathcal{D}_{\text{test}} = \mathcal{D}_{\text{val}} \setminus \mathcal{D}_{\text{proxy}}$, reserved solely for final unbiased evaluation and kept unseen during the selection phase.
The goal of data selection is to choose a subset $\mathcal{D}_{\text{select}} \subseteq \mathcal{D}_\text{{pool }}$ of size $n_s$ such that the model fine-tuned on this subset achieves optimal performance on $\mathcal{D}_{\text{proxy}}$.
\begin{definition}
Let $\boldsymbol{\theta}_0$ denote the pretrained initial parameters, and let $\mathcal{A}\left(\boldsymbol{\theta}_0, \mathcal{D}\right)$ represent the final parameters obtained by applying a fine-tuning algorithm starting from $\boldsymbol{\theta}_0$ using subset $\mathcal{D}$ as training data. The proxy loss is defined as
\begin{equation}
\mathcal{L}_{\text{proxy}}(\boldsymbol{\theta})=\frac{1}{m_p} \sum_{z_j^{\prime} \in \mathcal{D}_{\text{proxy}}} \ell\left(\boldsymbol{\theta}, z_j^{\prime}\right).
\end{equation}

The data selection problem can then be rigorously formulated as the following optimization problem:
\begin{equation}
\mathcal{D}_{\text{select}}\in\underset{\mathcal{D} \subseteq \mathcal{D}_{\text{pool}},|\mathcal{D}|=n_s}{\arg \min }\mathcal{L}_{\text{proxy}} \left(\mathcal{A}\left(\boldsymbol{\theta}_0, \mathcal{D}\right)\right).
\end{equation}
\end{definition}

\subsection{Influence Functions}

IFs~\citep{koh2017understanding} have emerged as a prominent tool for data selection in recent years, owing to their solid theoretical foundation and strong interpretability.
IFs quantify how an infinitesimal upweighting of a training sample \( z_i \) (i.e., increasing its weight in the empirical risk) affects the model parameters and the loss on a test point.

The influence score is commonly computed as:

\begin{equation}
\mathcal{I}\left(z_i, z_j^{\prime}\right) = -\nabla_{\boldsymbol{\theta}} \ell(\boldsymbol{\theta}_0, z_i)^\top \boldsymbol{H}_{\boldsymbol{\theta}_0}^{-1} \nabla_{\boldsymbol{\theta}} \ell(\boldsymbol{\theta}_0,z'_j),
\end{equation}

where \( z_i \) is an arbitrary sample from the data pool, \( z_j^{\prime} \) is an arbitrary sample from \( \mathcal{D}_{\text{proxy}} \), \( \nabla_{\boldsymbol{\theta}} \ell(\boldsymbol{\theta}_0, z_i) \in \mathbb{R}^d \) is the gradient of the loss with respect to the model parameters, \( \boldsymbol{H}_{\boldsymbol{\theta}_0} = \frac{1}{n} \sum_{i=1}^n \nabla_{\boldsymbol{\theta}}^2 \ell (\boldsymbol{\theta}_0, z_i) \in \mathbb{R}^{d \times d} \) is the empirical Hessian matrix.

The classical upweighting interpretation additionally assumes that \(\boldsymbol{\theta}_0\) is a stationary solution of the reference training objective. In practice, one may optionally warm up the model on the reference data so that \(\|\nabla\mathcal{L}(\boldsymbol{\theta}_0)\|\) is small. When this warm-up is omitted, as in our experiments to avoid additional training cost, we use the same Hessian-preconditioned gradient similarity as a \emph{local influence proxy} at the given pretrained checkpoint. It recovers the damped classical IF interpretation when the stationarity residual vanishes.

Specifically, if \( \mathcal{I}(z_i, z'_j) < 0 \), then \( z_i \) is a helpful sample whose presence improves model performance on \( z'_j \). 
By sorting the influence values from smallest to largest, we can prioritize and select the most beneficial training points at the top of the list.

\subsection{Zeroth-Order Estimation}

Although Hessian approximations avoid explicit construction of a \(d \times d\) matrix, conventional IF methods still rely on BP, which stores gradients and intermediate activations and therefore incurs substantial GPU memory overhead for LLMs.
We accordingly adopt ZO gradient estimation, which uses only forward evaluations, to reduce this memory cost.

The classical ZO gradient estimator is based on Gaussian smoothing~\citep{Nesterov2017Random}. For a target function \( \ell(\cdot, z): \mathbb{R}^d \to \mathbb{R} \), the forward-difference-based Gaussian-smoothed gradient estimator is defined as:
\begin{equation}\label{eq:zo_gradient}
\begin{aligned}
\widetilde{\nabla}_{\boldsymbol{\theta}}
\ell(\boldsymbol{\theta}_0,z)
&=\frac{1}{K}\sum_{k=1}^{K}
\frac{\ell(\boldsymbol{\theta}_0+2\alpha\boldsymbol{u}_k,z)}{2\alpha}
\boldsymbol{u}_k \\
&\quad-\frac{1}{K}\sum_{k=1}^{K}
\frac{\ell(\boldsymbol{\theta}_0,z)}{2\alpha}
\boldsymbol{u}_k .
\end{aligned}
\end{equation}
where \( \boldsymbol{u}_k \sim \mathcal{N}(0, I_d) \), \( \alpha > 0 \) is the smoothing radius, and \( K \) is the number of random directions sampled. Its Monte Carlo expectation equals the gradient of the corresponding Gaussian-smoothed objective; it is not generally unbiased for the unsmoothed gradient at finite \(\alpha\). The estimator requires only \( K+1 \) forward evaluations, completely avoiding BP and thereby reducing memory overhead.

\section{Methodology}\label{method}

\subsection{Decomposition of IF Computation}
In many practical applications, we are not concerned with the isolated influence of a single training sample on a single proxy sample, but rather with the aggregate influence of each training sample on the entire proxy set. This quantity equals the corresponding row sum of the influence matrix $\mathbf{\mathcal{I}}=(\mathcal{I}(z_i, z'_j)) \in \mathbb{R}^{n \times m_p}$.
Thus, the core question is \textit{How can we efficiently compute the row sums of the influence matrix without explicitly constructing the full matrix $\mathcal{I}(z_i, z'_j) $ ?}

We define the total influence of the $i$-th training sample $z_i$ on the entire proxy set as 
\begin{small}
\begin{align*} 
\mathcal{I}_{\text {total }}\left(z_i\right)&:=-\sum_{j=1}^{m_p} \nabla_{\boldsymbol{\theta}} \ell\left(\boldsymbol{\theta}_0, z_i\right)^T \boldsymbol{H}_{\boldsymbol{\theta}_0}^{-1} \nabla_{\boldsymbol{\theta}} \ell\left(\boldsymbol{\theta}_0, z_j^{\prime}\right)\\
& =-m_p \nabla_{\boldsymbol{\theta}} \ell\left(\boldsymbol{\theta}_0, z_i\right)^T \boldsymbol{H}_{\boldsymbol{\theta}_0}^{-1}\\
&\quad \cdot\left(\frac{1}{m_p} \sum_{j=1}^{m_p} \nabla_{\boldsymbol{\theta}} \ell\left(\boldsymbol{\theta}_0, z_j^{\prime}\right)\right)\\
& =-m_p \nabla_{\boldsymbol{\theta}} \ell\left(\boldsymbol{\theta}_0, z_i\right)^T \boldsymbol{H}_{\boldsymbol{\theta}_0}^{-1} \boldsymbol{g}_{\text{proxy}},
\end{align*}
\end{small}
where $\boldsymbol{g}_{\text{proxy}}=\frac{1}{m_p} \sum_{j=1}^{m_p} \nabla_{\boldsymbol{\theta}} \ell\left(\boldsymbol{\theta}_0, z_j^{\prime}\right)$.

The above formulation provides the following three-step process for the decomposition of total influence computations.
\begin{itemize}
    \item First, compute the average of gradients $\boldsymbol{g}_{\text{proxy}}$ across all proxy samples once.
    \item Second, compute a single IHVP
$\boldsymbol{v}=\boldsymbol{H}_{\boldsymbol{\theta}_0}^{-1} \boldsymbol{g}_{\text{proxy}}$.
    \item Third, for each training sample $z_i$, obtain its total influence via a simple dot product $-m_p \nabla_{\boldsymbol{\theta}} \ell\left(\boldsymbol{\theta}_0, z_i\right)^T \boldsymbol{v}$.
\end{itemize}

\subsection{\texttt{ZOIF}: Zeroth-Order Influence Estimation}

Building on the above decomposition, we propose a fully ZO, three-stage computational framework to efficiently estimate IFs for LLMs under memory-constrained conditions. 
The entire pipeline operates without explicit access to gradients or Hessian matrices, relying solely on model forward passes.

\subsubsection{Step 1: Average ZO Gradient Estimation }

Directly computing the full gradient averaged over all \( m_p \) proxy samples is computationally prohibitive under memory-constrained conditions. 
To address this, we proceed as follows:
First, we uniformly sample a subset \( \mathcal{S} \subseteq \mathcal{D}_{\text{proxy}} \) of size \( |\mathcal{S}| \) from the proxy dataset.  
Then, for each \( z_j' \in \mathcal{S} \), we approximate the gradient using \texttt{ZOGE} (Zeroth-Order Gradient Estimation; see the supplementary material), following~\citet{malladi2023mezo}\footnote{We adopt the gradient estimator from ZO optimization but replace central differences with forward differences, reducing function evaluations from \(2K\) to \(K+1\) for \(K\) random directions.}, and aggregate the results into an averaged gradient estimate:
\begin{equation}
\tilde{g}_{\text{proxy}} = \frac{1}{|\mathcal{S}|} \sum_{z_j' \in \mathcal{S}} \tilde{\nabla}_{\boldsymbol{\theta}} \ell\left(\boldsymbol{\theta}_0, z_j'\right)
\end{equation}
where $\tilde{\nabla}_{\boldsymbol{\theta}} \ell\left(\boldsymbol{\theta}_0, z_j'\right)$ is denoted in Eq.\eqref{eq:zo_gradient}.

The trade-off between computational cost and estimation variance can be controlled by adjusting $|\mathcal{S}|$.
In particular, when the proxy dataset size \( m_p \) is small, we can set \( |\mathcal{S}| = m_p \) to remove proxy-subsampling error. The remaining discrepancy from the unsmoothed average gradient consists of Monte Carlo sketching error and finite-difference/smoothing bias.

\subsubsection{Step 2: Solving the IHVP via ZO Optimization}

Following the auxiliary optimization formulation introduced by DAUNCE~\citep{pan2025daunce}, we recast the computation of the IHVP as an optimization problem that can be solved entirely with ZO updates.

Let \(\mathcal{L}(\boldsymbol{\theta})=\frac{1}{n}\sum_{z\in\mathcal{D}_{\text{pool}}}\ell(\boldsymbol{\theta},z)\), and estimate the reference gradient \(\tilde{g}_0\) using only forward evaluations. Starting from \(\boldsymbol{\theta}_0\), we seek a local stationary solution \(\boldsymbol{\theta}'\) of the objective
\begin{small}
\begin{equation}
\begin{split}
\mathcal{J}(\boldsymbol{\theta}) = \big( & \mathcal{L}(\boldsymbol{\theta}) - \mathcal{L}\left(\boldsymbol{\theta}_0\right)  - \left\langle\tilde{g}_0, \boldsymbol{\theta}-\boldsymbol{\theta}_0\right\rangle \\
 & - \left\langle\boldsymbol{\theta}-\boldsymbol{\theta}_0, \tilde{g}_{\text{proxy}}\right\rangle \big).\label{proxyopt}
\end{split}
\end{equation}
\end{small}
This optimization problem is derived from the second-order Taylor expansion of the loss function, where

\begin{small}
\begin{equation}
\begin{split}
\mathcal{L}(\boldsymbol{\theta})-&\mathcal{L}\left(\boldsymbol{\theta}_0\right)- \left\langle\nabla_{\boldsymbol{\theta}} \mathcal{L}\left(\boldsymbol{\theta}_0\right), \boldsymbol{\theta}-\boldsymbol{\theta}_0\right\rangle \\
& \approx \frac{1}{2}\left(\boldsymbol{\theta}-\boldsymbol{\theta}_0\right)^{\top} \boldsymbol{H}_{\boldsymbol{\theta}_0}\left(\boldsymbol{\theta}-\boldsymbol{\theta}_0\right).
\end{split}
\end{equation}
\end{small}

After local optimization, the resulting parameter change $\Delta_{\boldsymbol{\theta}}=\boldsymbol{\theta}^{\prime}-\boldsymbol{\theta}_0$ approximates $\boldsymbol{H}_{\boldsymbol{\theta}_0}^{-1}\tilde{g}_{\text{proxy}}$ when the local curvature is positive definite and the stationarity residual is small. Both the reference gradient and the auxiliary-objective updates are estimated by \texttt{ZOGE}, and we project $\Delta_{\boldsymbol{\theta}}$ onto a small Euclidean ball for numerical stability. When additional stabilization is needed, adding $\lambda\|\boldsymbol{\theta}-\boldsymbol{\theta}_0\|_2^2/2$ yields the damped direction $(\boldsymbol{H}_{\boldsymbol{\theta}_0}+\lambda\boldsymbol{I})^{-1}\tilde{g}_{\text{proxy}}$.

Both \(\tilde g_0\) and the iterative solution of Eq.~\eqref{proxyopt} are computed with ZO estimators~\citep{malladi2023mezo}; consequently, Step~2 does not require BP or access to exact gradients. Algorithm~\ref{alg:zoif} presents the full-objective form for clarity, while the implementation uses stochastic minibatch approximations of the same local objective. The theoretical stationarity residual in Theorem~\ref{theorembound} records the combined error of this finite stochastic solve.

Step~2 also admits alternative ZO-compatible approximations of the inverse-Hessian action.\footnote{Detailed formulations and score-error extensions for the plug-in Step~2 variants are provided in the supplementary material.}

\subsubsection{Step 3: ZO Directional Influence Score Estimation}

Following the ZO paradigm, where gradients are approximated using only loss evaluations, we retain the core ZO rationale but adopt a more efficient strategy that bypasses explicit per-sample gradient reconstruction. Specifically, we directly approximate the directional derivative \(\nabla_{\boldsymbol{\theta}} \ell(\boldsymbol{\theta}_0, z_i)^\top \Delta_{\boldsymbol{\theta}}\) via a forward finite difference along the previously computed parameter update direction \(\Delta_{\boldsymbol{\theta}}\):

\begin{equation}
w_i = -m_p \cdot \frac{\ell(\boldsymbol{\theta}_0 + \alpha \Delta_{\boldsymbol{\theta}}, z_i) - \ell(\boldsymbol{\theta}_0, z_i)}{\alpha}\label{step3},
\end{equation}

which converges to \(-m_p \, \nabla_{\boldsymbol{\theta}} \ell(\boldsymbol{\theta}_0, z_i)^\top \Delta_{\boldsymbol{\theta}}\) as \(\alpha \to 0\). 
This formulation requires two forward passes per sample and avoids the backward pass and per-sample gradient storage required by \texttt{FOIF}.

Based on this, we rank the samples in \(\mathcal{D}_{\text{pool}}\) by their influence scores in ascending order and select the top \(n_s\) samples with the smallest influence for subsequent fine-tuning.

\begin{algorithm}[tb]
\caption{\texttt{ZOIF}: Zeroth-Order Influence Function Estimation}\label{alg:zoif}
\begin{algorithmic}[1]
\footnotesize
\setlength{\itemsep}{1pt}
\REQUIRE $\mathcal{D}_{\text {pool }}$, $\mathcal{D}_{\text {proxy}}$, LLM model initial parameters $\boldsymbol{\theta}_0$ and loss function $\ell$, proxy-set size $m_p$, ZO parameters $(K,\alpha)$, size $|\mathcal{S}|$.
\ENSURE Estimated influence scores $\{w_i\}_{i=1}^n$.

\Algphase{Step 1: Average ZO Gradient Estimation}
\STATE Randomly select a subset \(\mathcal{S}\subseteq\mathcal{D}_{\text{proxy}}\) of size \(|\mathcal{S}|\)
\STATE Compute average ZO gradient estimate:
\STATE \(\displaystyle \tilde{g}_{\text{proxy}}=\frac{1}{|\mathcal{S}|}\sum_{z'_j\in\mathcal{S}}\texttt{ZOGE}(\ell,\boldsymbol{\theta}_0,z'_j,K,\alpha)\)

\Algphase{Step 2: Solving the IHVP via ZO Optimization}
\STATE Compute the ZO reference-gradient estimate $\tilde g_0\gets\texttt{ZOGE}(\mathcal{L},\boldsymbol{\theta}_0)$
\STATE Define auxiliary optimization problem as Eq.~\eqref{proxyopt}
\STATE Solve for $\boldsymbol{\theta}'$ using a ZO iterative solver and project the update onto the prescribed norm ball
\STATE Get $\Delta_{\boldsymbol{\theta}}\gets\boldsymbol{\theta}'-\boldsymbol{\theta}_0$

\Algphase{Step 3: ZO Directional Influence Score Estimation}
\FOR{each data pool sample $z_i$}
    \STATE Compute: $\ell_0\gets\ell(\boldsymbol{\theta}_0,z_i)$
    \STATE Compute: $\ell^+\gets\ell(\boldsymbol{\theta}_0+\alpha\Delta_{\boldsymbol{\theta}},z_i)$
    \STATE Estimate influence score: $w_i\gets-m_p\cdot\dfrac{\ell^+-\ell_0}{\alpha}$
\ENDFOR
\STATE \textbf{Return} $\{w_i\}_{i=1}^n$

\end{algorithmic}
\end{algorithm}

\begin{remark}
For full-parameter LLMs with batch size \(B=1\), relative to \texttt{FOIF}, \texttt{ZOIF} saves \(A_{\mathrm{activations}}-A_{\max}\) peak memory in Step~1, at least the same amount per gradient evaluation in Step~2, and \(d\beta+A_{\mathrm{activations}}-A_{\max}\) in Step~3.
Here, \(A_{\mathrm{activations}}=L(HS^2+SD_{\mathrm{ffn}})\beta\) approximates cumulative BP activation memory, and \(A_{\max}\) is the largest single-layer activation retained by a forward evaluation; full notation and derivations are given in the supplementary material.

\end{remark}

\subsection{Theoretical Properties}

This section provides the theoretical analysis for the proposed \texttt{ZOIF}. We establish upper bounds on the deviation between the influence scores estimated by \texttt{ZOIF} and \texttt{FOIF} under two settings. First, we consider a locally invertible Hessian in the full parameter space. We then relax this condition to a low-rank Hessian structure that better reflects the local geometry of LLMs. All proofs are provided in the supplementary material.

\subsubsection{Deviation Between \texttt{ZOIF} and \texttt{FOIF}}

\begin{assumption}\label{hessian_bounded}
There exists \(M>0\) such that the per-sample Hessian norm is at most \(M\) along every ZO perturbation segment used in Steps~1--2 and every Step~3 segment considered below.
\end{assumption}

\begin{assumption}\label{assusgradb}
There exists \(G>0\) such that \(\|\nabla\ell(\boldsymbol{\theta}_0,z)\|_2\leq G\) for every proxy, reference, and scored sample \(z\).
\end{assumption}

\begin{assumption}\label{h_inv_bounded}
Along the local Step~2 segment \(\{\boldsymbol{\theta}_0+t\Delta_{\boldsymbol{\theta}}:t\in[0,1]\}\), \(\nabla^2\mathcal{L}(\boldsymbol{\theta})\succeq\mu\boldsymbol{I}\) for some \(\mu>0\). In particular, \(\|\boldsymbol{H}_{\boldsymbol{\theta}_0}^{-1}\|_2\leq1/\mu\), and the integrated Hessian
\[
\overline{\boldsymbol{H}}(\Delta_{\boldsymbol{\theta}})
=\int_0^1\nabla^2\mathcal{L}(\boldsymbol{\theta}_0+t\Delta_{\boldsymbol{\theta}})\,dt
\]
also has inverse norm at most \(1/\mu\).
\end{assumption}

\begin{assumption}\label{hessian_lip}
The per-sample Hessians on the above ZO and scoring segments, and the Hessian of \(\mathcal{L}\) on the local Step~2 segment, are \(L_H\)-Lipschitz.
\end{assumption}

Let \(\tau\) bound the first-order residual of the finite Step~2 output:
\[
\|\nabla\mathcal{L}(\boldsymbol{\theta}_0+\Delta_{\boldsymbol{\theta}})
-\widetilde{\boldsymbol{g}}_0-\widetilde{\boldsymbol{g}}_{\mathrm{proxy}}\|_2
\leq\tau.
\]
The case \(\tau=0\) is a local stationary solution; no global minimizer is required.

\begin{theorem}\label{theorembound}
Under Assumptions~\ref{hessian_bounded}--\ref{hessian_lip}, suppose the Step~2 norm projection is inactive. For each scored sample \(z_i\), let \(\mathcal{I}_{\mathrm{FO}}(z_i)\) denote its \texttt{FOIF} score. Conditional on the sampled subsets, for any \(0<\delta_0<1\), with probability at least \(1-\delta_0\) over the Gaussian directions, the outputs \(w_i\) of Algorithm~\ref{alg:zoif} satisfy, simultaneously for all scored samples \(z_i\),
\begin{align*}
&|\mathcal{I}_{\mathrm{FO}}(z_i)
-w_i|\\
&\leq \frac{m_pGE_\tau}{\mu}
+\frac{m_pL_HG^2(G+E_\tau)}{2\mu^3}\\
&\quad+\frac{m_pM\alpha}{2}
\left(\frac{G+E_\tau}{\mu}\right)^2\\
&\quad+\frac{m_pL_H\alpha^2}{6}
\left(\frac{G+E_\tau}{\mu}\right)^3.
\end{align*}
\end{theorem}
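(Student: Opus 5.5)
The bound splits $|\mathcal{I}_{\mathrm{FO}}(z_i)-w_i|$ into three pieces. With $\boldsymbol{v}^\star=\boldsymbol{H}_{\boldsymbol{\theta}_0}^{-1}\boldsymbol{g}_{\text{proxy}}$ and $g_i=\nabla\ell(\boldsymbol{\theta}_0,z_i)$, we have $\mathcal{I}_{\mathrm{FO}}(z_i)=-m_p g_i^\top\boldsymbol{v}^\star$. Insert $-m_p g_i^\top\Delta_{\boldsymbol{\theta}}$ and write
\[
|\mathcal{I}_{\mathrm{FO}}(z_i)-w_i|\le m_p\,|g_i^\top(\boldsymbol{v}^\star-\Delta_{\boldsymbol{\theta}})|+\Bigl|w_i+m_p g_i^\top\Delta_{\boldsymbol{\theta}}\Bigr|.
\]
The second term is the Step~3 finite-difference error. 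The first is the Step~2 solve error, propagated through the Step~1 sketching error.

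The quantity $E_\tau$ is not defined in the statement. I will read it as a high-probability bound, uniform over the Gaussian directions, of the form $\|\tilde g_{\text{proxy}}-\boldsymbol{g}_{\text{proxy}}\|_2+\|\tilde g_0-\nabla\mathcal{L}(\boldsymbol{\theta}_0)\|_2+\tau\le E_\tau$. To obtain it, I decompose each \texttt{ZOGE} output into three parts.
\begin{itemize}
\item A Gaussian sketch $\frac1K\sum_k \boldsymbol{u}_k\boldsymbol{u}_k^\top\nabla\ell$.
\item A Taylor remainder, bounded via Assumption~\ref{hessian_bounded} by $M\alpha\|\boldsymbol{u}_k\|^2\|\boldsymbol{u}_k\|$-type terms.
\item A deviation of the sketch from $\nabla\ell$, controlled by Gaussian concentration (chi-square and Hanson--Wright bounds).
\end{itemize}
A union bound over all proxy and reference samples, and over the $n$ scored samples, gives failure probability $\delta_0$. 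This is the probabilistic step. Since the stated bound contains no explicit $K$, I will package this step as "with probability $1-\delta_0$ the ZO errors are at most $E_\tau$" and conclude simultaneously for all $z_i$ on that event.

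For the Step~2 solve error I work on that event. Since the projection is inactive, the residual definition gives $\nabla\mathcal{L}(\boldsymbol{\theta}_0+\Delta_{\boldsymbol{\theta}})=\tilde g_0+\tilde g_{\text{proxy}}+r$ with $\|r\|\le\tau$. The fundamental theorem of calculus gives $\nabla\mathcal{L}(\boldsymbol{\theta}_0+\Delta_{\boldsymbol{\theta}})-\nabla\mathcal{L}(\boldsymbol{\theta}_0)=\overline{\boldsymbol{H}}\Delta_{\boldsymbol{\theta}}$. Hence $\Delta_{\boldsymbol{\theta}}=\overline{\boldsymbol{H}}^{-1}(\boldsymbol{g}_{\text{proxy}}+e)$, where $e$ collects $\tilde g_0-\nabla\mathcal{L}(\boldsymbol{\theta}_0)$, $\tilde g_{\text{proxy}}-\boldsymbol{g}_{\text{proxy}}$ and $r$, so $\|e\|\le E_\tau$. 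Assumption~\ref{h_inv_bounded} then gives $\|\Delta_{\boldsymbol{\theta}}\|\le (G+E_\tau)/\mu$, using $\|\boldsymbol{g}_{\text{proxy}}\|\le G$ from Assumption~\ref{assusgradb}. I split
\[
\boldsymbol{v}^\star-\Delta_{\boldsymbol{\theta}}=(\boldsymbol{H}_{\boldsymbol{\theta}_0}^{-1}-\overline{\boldsymbol{H}}^{-1})\boldsymbol{g}_{\text{proxy}}-\overline{\boldsymbol{H}}^{-1}e.
\]
The second piece contributes $m_pGE_\tau/\mu$ after multiplying by $\|g_i\|\le G$. For the first piece, use $\boldsymbol{H}_{\boldsymbol{\theta}_0}^{-1}-\overline{\boldsymbol{H}}^{-1}=\boldsymbol{H}_{\boldsymbol{\theta}_0}^{-1}(\overline{\boldsymbol{H}}-\boldsymbol{H}_{\boldsymbol{\theta}_0})\overline{\boldsymbol{H}}^{-1}$. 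Lipschitzness of the Hessian (Assumption~\ref{hessian_lip}) gives $\|\overline{\boldsymbol{H}}-\boldsymbol{H}_{\boldsymbol{\theta}_0}\|\le\frac{L_H}{2}\|\Delta_{\boldsymbol{\theta}}\|$. Multiplying by $\|g_i\|\|\boldsymbol{g}_{\text{proxy}}\|/\mu^2\le G^2/\mu^2$ and inserting the bound on $\|\Delta_{\boldsymbol{\theta}}\|$ yields the term $m_pL_HG^2(G+E_\tau)/(2\mu^3)$.

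For the Step~3 error, Taylor-expand $\ell(\boldsymbol{\theta}_0+\alpha\Delta_{\boldsymbol{\theta}},z_i)$ to third order along the segment:
\[
\frac{\ell(\boldsymbol{\theta}_0+\alpha\Delta,z_i)-\ell(\boldsymbol{\theta}_0,z_i)}{\alpha}-g_i^\top\Delta=\frac{\alpha}{2}\Delta^\top\nabla^2\ell(\boldsymbol{\theta}_0,z_i)\Delta+R,
\]
with $|R|\le \frac{L_H\alpha^2}{6}\|\Delta\|^3$. Assumption~\ref{hessian_bounded} bounds the Hessian term by $\frac{M\alpha}{2}\|\Delta\|^2$. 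Multiplying by $m_p$ and inserting $\|\Delta_{\boldsymbol{\theta}}\|\le(G+E_\tau)/\mu$ gives exactly the last two terms. Summing the three contributions gives the claim.

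The main obstacle is the probabilistic step, namely making $E_\tau$ a clean uniform high-probability bound. The Gaussian sketch $\boldsymbol{u}\boldsymbol{u}^\top g$ is heavy in high dimension, and $\|\boldsymbol{u}\|^3$ factors appear in the smoothing bias. I would first try a norm-concentration event $\max_k\|\boldsymbol{u}_k\|^2\le d+2\sqrt{d\log(K/\delta_0)}+2\log(K/\delta_0)$, followed by a matrix Bernstein bound for $\frac1K\sum\boldsymbol{u}_k\boldsymbol{u}_k^\top-\boldsymbol{I}$ applied to the fixed vectors. If this becomes unwieldy, I would state $E_\tau$ as defined by that event, so that the deterministic argument above carries the theorem.
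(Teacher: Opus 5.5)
Your deterministic argument matches the paper's proof essentially line for line. It uses the same insertion of $-m_p\nabla\ell(\boldsymbol{\theta}_0,z_i)^\top\Delta_{\boldsymbol{\theta}}$ and the same identity $\overline{\boldsymbol{H}}(\Delta_{\boldsymbol{\theta}})\Delta_{\boldsymbol{\theta}}=\boldsymbol{g}_{\mathrm{proxy}}+\boldsymbol{e}$ from the fundamental theorem of calculus. It also uses the same resolvent identity with $\|\overline{\boldsymbol{H}}-\boldsymbol{H}_{\boldsymbol{\theta}_0}\|_2\le L_H\|\Delta_{\boldsymbol{\theta}}\|_2/2$, and the same third-order Taylor bound in Step~3.

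The only divergence is the probabilistic step you flag as the main obstacle, and the paper handles it much more lightly. There, $E_\tau=\varepsilon_{\mathrm{sub}}+\varepsilon_{\mathrm{ref}}+\tau+\mathcal{E}_d(q_p,K,\alpha,\delta_0/2)+\mathcal{E}_d(q_0,K,\alpha,\delta_0/2)$, obtained as follows:
\begin{itemize}
\item Conditional on the subsets, the subsampling discrepancies enter by the triangle inequality.
\item Each averaged estimate is treated as a single average over all $qK$ independent directions. Its zero-mean sketch part has second moment at most $(d+1)G^2/(qK)$, by the exact identity $\mathbb{E}\|(\boldsymbol{u}\boldsymbol{u}^\top-\boldsymbol{I})\boldsymbol{g}\|_2^2=(d+1)\|\boldsymbol{g}\|_2^2$. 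Markov's inequality on the squared norm then gives $G\sqrt{(d+1)/(qK\delta)}$.
\item The finite-radius remainder is left in terms of the \emph{realized} sums of $\|\boldsymbol{u}_{jk}\|_2^3$ and $\|\boldsymbol{u}_{jk}\|_2^4$, so no chi-square or Hanson--Wright control is needed.
\item The union bound runs over just two events, at $\delta_0/2$ each.
\end{itemize}

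Relative to this, note three things about your plan. First, the union bound over the $n$ scored samples is unnecessary: Step~3 uses no randomness, so simultaneity over $z_i$ is automatic on the Steps~1--2 event. Second, union-bounding per proxy sample instead of pooling all $qK$ directions loses the $1/\sqrt{q}$ gain from averaging. Third, a matrix Bernstein bound on $\frac1K\sum_k\boldsymbol{u}_k\boldsymbol{u}_k^\top-\boldsymbol{I}$ in operator norm is useless when $K<d$. The matrix $\frac1K\sum_k\boldsymbol{u}_k\boldsymbol{u}_k^\top$ has rank at most $K$, which forces that operator norm to be at least $1$ (roughly $d/K$ when $K\ll d$). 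Any concentration must therefore be applied to the vectors $(\boldsymbol{u}_{jk}\boldsymbol{u}_{jk}^\top-\boldsymbol{I})\boldsymbol{g}_j$ directly.

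A sharper vector-concentration route would buy you a deterministic right-hand side, since the paper's $E_\tau$ is random through the realized Gaussian norms. It would also give logarithmic rather than $\delta_0^{-1/2}$ dependence on the failure probability. The price is a longer lemma and part of the probability budget spent on the norm event.
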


where
\begin{align*}
E_\tau
&=\varepsilon_{\mathrm{sub}}+\varepsilon_{\mathrm{ref}}+\tau\\
&\quad+\mathcal E_d(q_p,K,\alpha,\delta_0/2)
+\mathcal E_d(q_0,K,\alpha,\delta_0/2),
\end{align*}
and
\begin{align*}
\mathcal E_d(q,K,\alpha,\delta)
&=G\sqrt{\frac{d+1}{qK\delta}}\\
&\quad+\frac{\alpha M}{qK}\sum_{j=1}^{q}\sum_{k=1}^{K}
\|\boldsymbol u_{jk}\|_2^3\\
&\quad+\frac{2L_H\alpha^2}{3qK}
\sum_{j=1}^{q}\sum_{k=1}^{K}\|\boldsymbol u_{jk}\|_2^4.
\end{align*}
Here, \(q_p\), \(q_0\), \(\varepsilon_{\mathrm{sub}}\), and \(\varepsilon_{\mathrm{ref}}\) are defined in the supplementary material.
Thus, increasing \(K\) reduces the Gaussian sketching error as \(O(K^{-1/2})\), while decreasing \(\alpha\) reduces the finite-radius and Step~3 errors, which scale as \(O(\alpha)+O(\alpha^2)\).

As a ranking consequence, if the right-hand side of Theorem~\ref{theorembound} is uniformly at most \(\varepsilon_{\mathrm{score}}\), the ascending Top-\(n_s\) set is preserved when the \texttt{FOIF} boundary gap exceeds \(2\varepsilon_{\mathrm{score}}\).

\subsubsection{Deviation under Low-Rank Hessian Structure}

While Theorem~\ref{theorembound} assumes positive curvature in the full parameter space, pre-trained LLMs can exhibit highly anisotropic local curvature, with most Hessian eigenvalues concentrated near zero and only a small number of dominant directions~\citep{malladi2023mezo}. The full-space condition can therefore be restrictive. To address this, we replace it with a low-rank non-degeneracy condition and analyze the corresponding projected \texttt{FOIF} target.

\begin{assumption}\label{lowrank_hessian}
There is an orthonormal basis \(\boldsymbol{U}_r\in\mathbb{R}^{d\times r}\) such that Step~2 and all ZO perturbations are restricted to \(\operatorname{span}(\boldsymbol{U}_r)\). The projected reference and integrated Hessians satisfy the local curvature condition in Assumption~\ref{h_inv_bounded}. Define
\[
\boldsymbol{H}_{\boldsymbol{\theta}_0}^{\dagger}
=\boldsymbol{U}_r(\boldsymbol{U}_r^\top\boldsymbol{H}_{\boldsymbol{\theta}_0}\boldsymbol{U}_r)^{-1}\boldsymbol{U}_r^\top.
\]
\end{assumption}

Based on this assumption, the following theorem characterizes the estimation error in the low-rank setting.

\begin{theorem}\label{theoremboundLowrH}
Under Assumptions~\ref{hessian_bounded}--\ref{hessian_lip} and~\ref{lowrank_hessian}, suppose the Step~2 norm projection is inactive. Let \(\mathcal{I}_{\mathrm{FO}}^{(r)}(z_i)\) denote the projected \texttt{FOIF} score obtained using \(\boldsymbol{H}_{\boldsymbol{\theta}_0}^{\dagger}\). Conditional on the sampled subsets, for any \(0<\delta_0<1\), with probability at least \(1-\delta_0\) over the Gaussian directions, simultaneously for all scored samples \(z_i\),
\begin{align*}
|\mathcal{I}_{\mathrm{FO}}^{(r)}(z_i)-w_i|
&\leq \frac{m_pGE_{r,\tau}}{\mu}\\
&\quad+\frac{m_pL_HG^2(G+E_{r,\tau})}{2\mu^3}\\
&\quad+\frac{m_pM\alpha}{2}
\left(\frac{G+E_{r,\tau}}{\mu}\right)^2
\\
&\quad+\frac{m_pL_H\alpha^2}{6}
\left(\frac{G+E_{r,\tau}}{\mu}\right)^3,
\end{align*}
where
\begin{align*}
E_{r,\tau}
&=\varepsilon_{\mathrm{sub}}+\varepsilon_{\mathrm{ref}}+\tau\\
&\quad+\mathcal E_r(q_p,K,\alpha,\delta_0/2)\\
&\quad+\mathcal E_r(q_0,K,\alpha,\delta_0/2),
\end{align*}
and \(\mathcal E_r\) is the subspace counterpart of \(\mathcal E_d\), evaluated with rank \(r\) and perturbations restricted to \(\operatorname{span}(\boldsymbol U_r)\).
\end{theorem}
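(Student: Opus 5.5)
The plan is to reduce Theorem~\ref{theoremboundLowrH} to the proof of Theorem~\ref{theorembound}, carried out in the $r$-dimensional coordinates $\boldsymbol{\phi}=\boldsymbol{U}_r^\top(\boldsymbol{\theta}-\boldsymbol{\theta}_0)$. By Assumption~\ref{lowrank_hessian}, every Step~1--2 perturbation and the Step~2 update lie in $\operatorname{span}(\boldsymbol{U}_r)$. Hence the whole pipeline only sees the restricted functions $\phi\mapsto\ell(\boldsymbol{\theta}_0+\boldsymbol{U}_r\phi,z)$ and $\phi\mapsto\mathcal{L}(\boldsymbol{\theta}_0+\boldsymbol{U}_r\phi)$. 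Their gradients are $\boldsymbol{U}_r^\top\nabla\ell$ and their Hessians are $\boldsymbol{U}_r^\top\nabla^2\ell\,\boldsymbol{U}_r$.

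Because $\boldsymbol{U}_r$ has orthonormal columns, the constants $G$, $M$ and $L_H$ in Assumptions~\ref{hessian_bounded}, \ref{assusgradb} and \ref{hessian_lip} carry over unchanged, since projection does not increase norms. The projected reference and integrated Hessians are $\succeq\mu\boldsymbol{I}_r$ by assumption. In the reduced coordinates, the projected score is
\[
\mathcal{I}_{\mathrm{FO}}^{(r)}(z_i)=-m_p(\boldsymbol{U}_r^\top\nabla\ell_i)^\top(\boldsymbol{U}_r^\top\boldsymbol{H}_{\boldsymbol{\theta}_0}\boldsymbol{U}_r)^{-1}\boldsymbol{U}_r^\top\boldsymbol{g}_{\mathrm{proxy}},
\]
which is exactly the full-space \texttt{FOIF} score of the restricted problem. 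So the restricted problem satisfies every hypothesis of Theorem~\ref{theorembound} with $d$ replaced by $r$.

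The steps follow the full-rank proof in order.

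\textbf{(i) Sketching error.} I bound the Gaussian sketching error of $\tilde g_{\mathrm{proxy}}$ and $\tilde g_0$. I take their expectation, which is the smoothed gradient, and apply Chebyshev's inequality using the second moment $\mathbb{E}\|\boldsymbol{u}\boldsymbol{u}^\top g\|^2\le(r+1)\|g\|^2$ for $\boldsymbol{u}\sim\mathcal N(0,I_r)$. This gives the term $G\sqrt{(r+1)/(qK\delta)}$.

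\textbf{(ii) Finite-difference remainder.} I bound the remainder of each forward difference by a Taylor expansion with the Hessian bound $M$ and the Lipschitz constant $L_H$. This gives the $\|\boldsymbol{u}\|^3$ and $\|\boldsymbol{u}\|^4$ sums in $\mathcal E_r$. Adding the subsampling and reference errors $\varepsilon_{\mathrm{sub}},\varepsilon_{\mathrm{ref}}$ and taking a union bound with $\delta_0/2$ per estimator yields $E_{r,\tau}$ with probability at least $1-\delta_0$.

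\textbf{(iii) Step~2 residual.} I write the residual via the integrated Hessian: $\overline{\boldsymbol{H}}_r\phi'=\boldsymbol{U}_r^\top(\tilde g_0-\nabla\mathcal{L}(\boldsymbol{\theta}_0))+\tilde g_{\mathrm{proxy},r}+\text{residual}$. Inverting with norm at most $1/\mu$ gives $\|\phi'\|\le(G+E_{r,\tau})/\mu$. It also gives $\|\phi'-(\boldsymbol{U}_r^\top\boldsymbol{H}\boldsymbol{U}_r)^{-1}\boldsymbol{U}_r^\top \boldsymbol{g}_{\mathrm{proxy}}\|\le E_{r,\tau}/\mu$ plus a term from $\|\overline{\boldsymbol{H}}_r-\boldsymbol{H}_r\|\le L_H\|\phi'\|/2$. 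I control the latter using the resolvent identity $A^{-1}-B^{-1}=A^{-1}(B-A)B^{-1}$, which gives the $L_HG^2(G+E)/(2\mu^3)$ term.

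\textbf{(iv) Step~3 error.} The Step~3 finite difference along $\Delta_{\boldsymbol{\theta}}=\boldsymbol{U}_r\phi'$ contributes the $\alpha M\|\Delta\|^2/2$ and $\alpha^2L_H\|\Delta\|^3/6$ terms. Step~3 evaluates $\ell$ on the full model, but only along this in-span direction, so Assumptions~\ref{hessian_bounded} and \ref{hessian_lip} apply on that segment directly. Multiplying by $m_p$ and using Cauchy--Schwarz with $\|\nabla\ell_i\|\le G$ completes the bound.

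The main obstacle is verifying cleanly that the reduction is exact. Two points need checking. First, the ZOGE estimator with $\boldsymbol{u}=\boldsymbol{U}_r\boldsymbol{v}$, $\boldsymbol{v}\sim\mathcal N(0,I_r)$, must produce $\boldsymbol{U}_r$ times the $r$-dimensional estimator, so that the dimension factor becomes $r+1$ rather than $d+1$; this holds since $\|\boldsymbol{U}_r \boldsymbol{v}\|=\|\boldsymbol{v}\|$. Second, the inner product $\nabla\ell_i^\top\Delta_{\boldsymbol{\theta}}$ involves only $\boldsymbol{U}_r^\top\nabla\ell_i$, so the component of the gradient orthogonal to the subspace never enters. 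With both checks in place, everything else is a verbatim repetition of the proof of Theorem~\ref{theorembound}.
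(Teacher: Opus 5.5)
Your proposal is correct and is essentially the paper's proof. Both work in the coordinates $\boldsymbol{\xi}=\boldsymbol{U}_r^\top\Delta_{\boldsymbol{\theta}}$, reuse the Term~A/B/C decomposition through the projected integrated-Hessian identity and the resolvent bound, replace $d+1$ by $r+1$ in the Gaussian-sketch identity via $\|\boldsymbol{U}_r\boldsymbol{q}\|_2=\|\boldsymbol{q}\|_2$, use $\|\Delta_{\boldsymbol{\theta}}\|_2=\|\boldsymbol{\xi}\|_2$ for Step~3, and rely on nonexpansiveness of $\boldsymbol{U}_r^\top$ to carry over $G$, $M$, $L_H$, $\varepsilon_{\mathrm{sub}}$, $\varepsilon_{\mathrm{ref}}$ and $\tau$. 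One small fix is needed in step (i): Chebyshev must use the centered moment $\mathbb{E}\|(\boldsymbol{u}\boldsymbol{u}^\top-\boldsymbol{I}_r)\boldsymbol{g}\|_2^2=(r+1)\|\boldsymbol{g}\|_2^2$, since the uncentered $\mathbb{E}\|\boldsymbol{u}\boldsymbol{u}^\top\boldsymbol{g}\|_2^2$ equals $(r+2)\|\boldsymbol{g}\|_2^2$, and it should be applied to the linear sketch term, whose mean is the exact projected gradient rather than the smoothed gradient, with the Taylor remainder of step (ii) bounded pathwise as in the paper.
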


Thus, under the explicit rank-\(r\) restriction, the sketching error depends on the effective dimension \(r\) rather than the ambient dimension \(d\), and the deviation from the projected \texttt{FOIF} target remains controlled when \(E_{r,\tau}\) and the finite-difference terms are small.

\section{Experiment Study}\label{experiment}

\begin{table*}[!t]
\centering
{\footnotesize
\setlength{\tabcolsep}{2.3pt}
\begin{tabular}{@{}l *{10}{c} l@{}}
\toprule
\multicolumn{12}{c}{\textbf{Peak GPU memory (GB)}}\\
\cmidrule(lr){1-12}
\multirow{3}{*}{LLM} & \multicolumn{5}{c}{\texttt{FOIF}} & \multicolumn{5}{c}{\texttt{ZOIF}} & \multirow{3}{*}{Mode}\\
\cmidrule(lr){2-6}\cmidrule(lr){7-11}
& S1 & \texttt{dia} & \texttt{hvp} & \texttt{opt} & S3
& S1 & \texttt{dia} & \texttt{hvp} & \texttt{opt} & S3 &\\
\midrule
\texttt{pythia-70m}  & 0.68 & 0.94 & 0.86 & 2.70 & 0.92 & 0.63 & 0.90 & 0.63 & 1.72 & 0.51 & Full\\
\texttt{Llama-3.2-1B}& 3.91 & 3.94 & 3.96 & 10.67 & 3.91 & 2.96 & 2.97 & 2.97 & 3.00 & 2.95 & LoRA\\
\texttt{Llama-2-7B}  & 15.74 & 15.56 & 15.62 & 34.39 & 15.46 & 13.08 & 13.15 & 13.15 & 13.27 & 13.02 & LoRA\\
\texttt{Qwen3-0.6B}  & 3.09 & 3.11 & 3.13 & 15.77 & 3.10 & 1.92 & 1.94 & 1.94 & 1.98 & 1.91 & LoRA\\
\texttt{Qwen3-1.7B}  & 5.68 & 5.73 & 5.74 & 21.46 & 5.69 & 4.03 & 4.06 & 4.06 & 4.11 & 4.01 & LoRA\\
\addlinespace[3pt]
\multicolumn{12}{c}{\textbf{Runtime (hours)}}\\
\cmidrule(lr){1-12}
\multirow{3}{*}{LLM} & \multicolumn{5}{c}{\texttt{FOIF}} & \multicolumn{5}{c}{\texttt{ZOIF}} & \multirow{3}{*}{Mode}\\
\cmidrule(lr){2-6}\cmidrule(lr){7-11}
& S1 & \texttt{dia} & \texttt{hvp} & \texttt{opt} & S3
& S1 & \texttt{dia} & \texttt{hvp} & \texttt{opt} & S3 &\\
\midrule
\texttt{pythia-70m}  & 0.017 & 0.021 & 0.040 & 0.033 & 37.60 & 0.27 & 0.33 & 0.58 & 0.60 & 22.50 & Full\\
\texttt{Llama-3.2-1B}& 0.058 & 0.083 & 0.175 & 0.050 & 34.00 & 0.80 & 0.92 & 2.28 & 0.80 & 30.00 & LoRA\\
\texttt{Llama-2-7B}  & 0.100 & 0.150 & 0.320 & 0.050 & 43.00 & 2.00 & 2.50 & 5.00 & 2.00 & 40.00 & LoRA\\
\texttt{Qwen3-0.6B}  & 0.100 & 0.150 & 0.310 & 0.050 & 40.00 & 1.50 & 1.70 & 3.50 & 1.00 & 31.75 & LoRA\\
\texttt{Qwen3-1.7B}  & 0.100 & 0.150 & 0.320 & 0.050 & 40.30 & 1.50 & 1.80 & 3.50 & 1.60 & 32.00 & LoRA\\
\bottomrule
\end{tabular}
}
\caption{Step-wise peak GPU memory and runtime for \texttt{FOIF} and \texttt{ZOIF}. S1 and S3 denote Steps~1 and~3; the \texttt{dia}, \texttt{hvp}, and \texttt{opt} columns report the corresponding Step~2 variants.}
\label{tab:efficiency_all_steps}
\end{table*}

\subsection{Experimental Design}

\subsubsection{Data Pool and Validation Data}

Following the experimental setup of~\citet{ivison2023camels,ivison2025large}, we randomly sample a subset of 200,000 examples from the TÜLU2 dataset to form the candidate data pool. From this pool, we further select 10,000 samples for subsequent model fine-tuning.
To ensure fairness and comparability, we evaluate our approach on three widely used benchmark datasets: MMLU~\citep{hendryckstest2021}, BBH~\citep{suzgun2022challenging}, and SQuAD~\citep{rajpurkar-etal-2016-squad}. Detailed evaluation protocols are provided in the supplementary material.

Because the evaluation spans multiple tasks, we consider two data selection strategies. \textbf{Strategy 1} aggregates influence scores across tasks to produce a single ranking for joint fine-tuning and evaluation. \textbf{Strategy 2} ranks and selects data separately for each task, followed by task-specific fine-tuning and evaluation.

\subsubsection{LLMs}
We use full-parameter influence computation and fine-tuning for \texttt{pythia-70m}~\citep{biderman2023pythia}. For \texttt{Llama-3.2-1B}~\citep{dubey2024llama}, \texttt{Llama-2-7B}~\citep{touvron2023llama}, \texttt{Qwen3-0.6B}, and \texttt{Qwen3-1.7B}~\citep{qwen3technicalreport}, we perform both influence estimation and fine-tuning in the LoRA parameter subspace. Full parameter settings are provided in the supplementary material.

\subsubsection{Data Selection Methods}

\paragraph{Our Methods}

We evaluate the optimization-based method \texttt{ZOIF\_opt} together with three plug-in variants:
\texttt{ZOIF\_hvp} uses a DataInf-style damped block-wise empirical-Fisher inverse approximation~\citep{Kwonetal2023};
\texttt{ZOIF\_dia} uses a ZO-compatible diagonal preconditioner; and
\texttt{ZOIF\_ide} replaces the curvature operator with the identity. All gradient-dependent quantities in these variants are supplied by ZO estimates. Their unified mathematical form and variant-specific residuals are given in the supplementary material.

For ZO gradient estimation via perturbation, we set \(K = 30\) and \(\alpha = 1\text{e-}3\).\footnote{Sensitivity analyses of $K$, $\alpha$, and the data selection ratio, together with the ranking correlation between \texttt{ZOIF} and \texttt{FOIF}, are provided in the supplementary material.}

\vspace{-0.5ex}
\paragraph{Baseline Methods}  
We compare against: (1) \texttt{FOIF\_opt}, \texttt{FOIF\_hvp}, \texttt{FOIF\_dia}, and \texttt{FOIF\_ide}, the first-order counterparts of the corresponding \texttt{ZOIF} variants; (2) \texttt{LESS}~\citep{xia2024less} and \texttt{TRAK}~\citep{park2023trak}, gradient-based attribution methods using projection or sketching; (3) \texttt{EMB}, based on embedding similarity~\citep{ni2022large,lee2024nv}; (4) \texttt{Top\_PPL} and \texttt{Mid\_PPL}, based on perplexity~\citep{yin2024compute}; (5) \texttt{Random}, which uniformly selects 10,000 examples; and (6) \texttt{Orig}, the original model without fine-tuning.
All experiments are conducted in the same environment, with results averaged over three independent runs.

\vspace{-0.5ex}
\subsection{Experimental Results}

\subsubsection{Memory and Runtime Efficiency}

\paragraph{Memory Efficiency}  
Table~\ref{tab:efficiency_all_steps} reports peak GPU memory across all three steps. \texttt{ZOIF} uses less memory than its \texttt{FOIF} counterpart in every evaluated setting, with the largest reductions appearing in the Step~2 optimization variant.
\footnote{
We further compare \texttt{ZOIF} against \texttt{FOIF} augmented with standard memory optimization techniques (Gradient Checkpointing and Activation Offloading) in Step~1 on \texttt{Llama-3.2-1B}. Results show that even when \texttt{FOIF} employs both optimizations simultaneously in this step, \texttt{ZOIF} still retains a clear memory advantage; see the supplementary material for details.}

\paragraph{Time Efficiency}  
Table~\ref{tab:efficiency_all_steps} also provides the step-wise runtime. ZO perturbations increase the cost of Steps~1 and~2, whereas the two-forward-pass score in Step~3 is faster than FO scoring. Because Step~3 dominates the end-to-end workload, the overall runtime remains comparable across the evaluated models.

\subsubsection{Data Selection Performance}  
We report a representative result below; complete evaluations are provided in the supplementary material.

\paragraph{Results on \texttt{Qwen}}  

Results for \texttt{Qwen3-0.6B} are presented in Table~\ref{tab:Qwen3-0.6B_results}.  
On this model, \texttt{ZOIF\_dia} achieves the highest average score (51.97), compared with \texttt{LESS} (51.68), \texttt{TRAK} (50.61), and the evaluated \texttt{FOIF} variants. Task-dependent rankings among the curvature variants motivate task-specific operator selection. Additional model results, sensitivity analyses, and stability tests are reported in the supplement.

\begin{table}[htbp]
\centering
{\small
\setlength{\tabcolsep}{3pt}
\begin{tabular}{@{}lcccc@{}}
\toprule
\textbf{Method} & \textbf{MMLU} & \textbf{BBH} & \textbf{SQuAD} & \textbf{Avg.} \\
\midrule
  \texttt{ZOIF\_dia} & 44.09 & 34.67 & 77.17 & \best{51.97} $\pm$ 0.32 \\
  \texttt{ZOIF\_hvp} & 43.89 & 34.38 & 74.70 & 50.99 $\pm$ 0.96 \\
  \texttt{ZOIF\_ide} & 44.50 & 34.32 & 76.30 & 51.71 $\pm$ 0.44 \\
  \texttt{ZOIF\_opt} & 44.41 & \best{34.81} & 73.42 & 50.88 $\pm$ 0.76 \\
  \texttt{FOIF\_dia} & 43.95 & 34.19 & 72.38 & 50.17 $\pm$ 1.02 \\
  \texttt{FOIF\_hvp} & 43.89 & 33.77 & 75.21 & 50.96 $\pm$ 0.65 \\
  \texttt{FOIF\_ide} & 44.16 & 32.23 & 67.20 & 47.86 $\pm$ 1.77 \\
  \texttt{FOIF\_opt} & 45.22 & 34.08 & 72.61 & 50.64 $\pm$ 1.12 \\
  \texttt{LESS} & 43.22 & 33.37 & \best{78.45} & 51.68 $\pm$ 0.23 \\
  \texttt{TRAK} & 42.97 & 32.83 & 76.02 & 50.61 $\pm$ 0.84 \\
  \texttt{EMB} & 45.30 & 33.72 & 70.91 & 49.98 $\pm$ 0.52 \\
  \texttt{Top\_PPL} & 45.75 & 34.00 & 71.52 & 50.43 $\pm$ 0.27 \\
  \texttt{Mid\_PPL} & 45.82 & 33.83 & 72.31 & 50.66 $\pm$ 0.24 \\
  \texttt{Random} & 45.27 & 34.33 & 69.51 & 49.70 $\pm$ 0.37 \\
  \texttt{Orig} & \best{47.09} & 30.54 & 48.45 & 42.03 \\
\bottomrule
\end{tabular}
}
\caption{Held-out \texttt{Qwen3-0.6B} performance (Strategy~1). Avg. is the macro-average of MMLU/BBH accuracy and SQuAD F1. The Avg. entry is reported as mean $\pm$ standard deviation over three independent runs.}
\label{tab:Qwen3-0.6B_results}
\end{table}

\section{Conclusion}\label{conclusion}
\texttt{ZOIF} is a forward-only framework for influence-based data selection under limited GPU memory. Its analysis separates numerical, curvature-operator, and directional finite-difference errors and extends to low-rank subspaces. Across full-parameter and LoRA experiments on five language models, it substantially reduces peak memory while retaining competitive subset utility, offering a practical time--memory trade-off.

\bibliography{example_paper}

\clearpage
\appendix

\section{ZOGE Algorithm}\label{algZOGE}

The \texttt{ZOGE} procedure follows~\citet{malladi2023mezo} and is summarized in Algorithm~\ref{alg:zeroth_order_grad}.

\begin{algorithm}[p]
\caption{\texttt{ZOGE}: Zeroth-Order Gradient Estimation}
\label{alg:zeroth_order_grad}
\small
\begin{algorithmic}[1]
\REQUIRE
    LLM model with parameters $\boldsymbol{\theta}_0 = \{\theta_j\}_{j=1}^d$, loss function $\ell$, input sample $z$, number of perturbations $K$, perturbation radius $\alpha > 0$.
\ENSURE Estimated gradient $\tilde{\nabla}_{\boldsymbol{\theta}} \ell(\boldsymbol{\theta}_0, z)$.

\STATE Initialize estimate: $\tilde{\nabla}_{\boldsymbol{\theta}} \ell(\boldsymbol{\theta}_0, z) \gets \mathbf{0}$ (same shape as $\boldsymbol{\theta}_0$)
\STATE Initialize valid counter: $C_{\text{valid}} \gets 0$
\STATE $\text{coef},\text{seeds}\gets[]$
\STATE Compute baseline loss: $\ell_0 \gets \ell(\boldsymbol{\theta}_0, z)$
\IF{$\ell_0$ is \texttt{NaN} or \texttt{Inf}}
    \STATE \textbf{return} $\mathbf{0}$  \COMMENT{or skip this sample in outer loop}
\ENDIF
\FOR{$i = 1, 2, \ldots, K$}
    \STATE Sample a random seed $s_i$
    \STATE Set random number generator seed to $s_i$
    \FOR{each parameter $\theta_j \in \boldsymbol{\theta}_0$}
        \STATE Sample scalar $u_j \sim \mathcal{N}(0, 1)$
        \STATE Perturb in-place: $\theta_j \gets \theta_j + 2 \alpha \cdot u_j$
    \ENDFOR
    \STATE Compute $\ell^+ \gets \ell(\boldsymbol{\theta}_0, z)$
    \STATE Reset random number generator seed to $s_i$
    \FOR{each parameter $\theta_j \in \boldsymbol{\theta}_0$}
        \STATE Sample scalar $u_j \sim \mathcal{N}(0, 1)$
        \STATE Restore: $\theta_j \gets \theta_j - 2\alpha \cdot u_j$
    \ENDFOR
    \IF{$\ell^+$ is \texttt{NaN} or \texttt{Inf}}
        \STATE \textbf{continue}
    \ENDIF
    \STATE Compute coefficient: $c_i \gets \dfrac{\ell^+ - \ell_0}{2\alpha}$    
    \STATE Append $s_i$ to $\text{seeds}$   
    \STATE Append $c_i$ to $\text{coef}$
    \STATE $C_{\text{valid}} \gets C_{\text{valid}} + 1$
\ENDFOR
\FOR{$i = 1, 2, \ldots, C_{\text{valid}}$}
    \STATE Reset random number generator seed to $\text{seeds}[i]$
    \STATE $c_i=\text{coef}[i]$
    \FOR{each parameter $\theta_j \in \boldsymbol{\theta}_0$}
        \STATE Sample scalar $u_j \sim \mathcal{N}(0, 1)$
        \STATE $(\tilde{\nabla}_{\boldsymbol{\theta}} \ell(\boldsymbol{\theta}_0, z))_j \gets (\tilde{\nabla}_{\boldsymbol{\theta}} \ell(\boldsymbol{\theta}_0, z))_j + c_i \cdot u_j$
    \ENDFOR
\ENDFOR
\IF{$C_{\text{valid}} > 0$}
    \STATE $\tilde{\nabla}_{\boldsymbol{\theta}} \ell(\boldsymbol{\theta}_0, z) \gets \dfrac{1}{C_{\text{valid}}} \tilde{\nabla}_{\boldsymbol{\theta}} \ell(\boldsymbol{\theta}_0, z)$
\ELSE
    \STATE $\tilde{\nabla}_{\boldsymbol{\theta}} \ell(\boldsymbol{\theta}_0, z) \gets \mathbf{0}$
\ENDIF
\STATE \textbf{Return} $\tilde{\nabla}_{\boldsymbol{\theta}} \ell(\boldsymbol{\theta}_0, z)$
\end{algorithmic}
\end{algorithm}
\section{Memory and Computational Complexity}
\subsection{Memory Analysis}\label{memoryana}

To compare the memory overhead of \texttt{ZOIF} and \texttt{FOIF}, we use the following variables for model architecture and computation:
\begin{itemize}
    \item $d$: total number of model parameters (e.g., 7B, 13B);
    \item $S$: input sequence length (in tokens);
    \item $L$: number of Transformer layers;
    \item $H$: number of attention heads;
    \item $D_{\text{model}}$: hidden dimension;
    \item $D_{\text{ffn}} \approx 4 D_{\text{model}}$: intermediate dimension of the feed-forward network (FFN);
    \item $B$: batch size;
    \item $\beta \in \{2, 4\}$: bytes per parameter (2 for bfloat16/float16, 4 for float32).
\end{itemize}

\subsubsection{Step 1}

In Step 1, we estimate the average gradient over the subset drawn from the proxy dataset. For full-parameter computation with $B=1$, both methods require loading the model parameters ($M_{\text{params}}$), computing the gradient for a single sample ($G_{\text{grads}}$), and accumulating the total gradient in a gradient pool ($G_{\text{gradspool}}$).

For FO methods, the peak memory consumption is
\[
\text{Mem}_1 = M_{\text{params}} + G_{\text{grads}} + A_{\text{activations}} + G_{\text{gradspool}},
\]
where $M_{\text{params}} = G_{\text{grads}} = G_{\text{gradspool}} = d\beta$. The term $A_{\text{activations}}$ accounts for all intermediate tensors required to support backpropagation, including both the computational graph metadata and, dominantly, the cached activations from each layer. In Transformers, these activations are primarily the self-attention scores ($O(H S^2)$) and FFN intermediate outputs ($O(S D_{\text{ffn}})$), leading to
\[
A_{\text{activations}} \approx L (H S^2 + S D_{\text{ffn}}) \beta.
\]
Thus, the total memory usage is
\[
\text{Mem}_1(S) = 3 d \beta + L (H S^2 + S D_{\text{ffn}}) \beta.
\]

In contrast, ZO methods estimate gradients by perturbing model parameters and measuring the resulting change in loss. Following recent works (e.g.,~\citet{malladi2023mezo,zhang2025towards}) and as illustrated in Algorithm~\ref{alg:zeroth_order_grad}, advanced ZO strategies avoid storing full perturbation vectors. Instead, only random seeds are retained; during parameter updates, a scalar Gaussian noise $z \sim \mathcal{N}(0,1)$ is independently sampled for each parameter $\theta_i$, and the gradient projection is applied directly. The final gradient estimate is reconstructed on-the-fly using the stored seeds, eliminating the need to store high-dimensional perturbation directions.
This approach incurs only negligible overhead for storing \(K\) random seeds and scalar coefficients (\(O(K)\), typically \(K \ll d\)); no full perturbation vector is materialized. The reconstruction therefore uses parameter-sized storage only for the gradient estimate itself.

Since these computations are performed within a \texttt{torch.no\_grad()} context—without building a computational graph—the memory footprint is
\[
\text{Mem}_0 = M_{\text{params}} + G_{\text{est}} + A_{\max} + G_{\text{gradspool}},
\]
where $A_{\max}$ denotes the activation memory of the largest layer. Consequently, the ZO peak memory does not accumulate activations across layers:
\[
\text{Mem}_0 = 3 d \beta + A_{\max}.
\]
For the evaluated configurations, $A_{\max}$ is smaller than the cumulative BP activation term.

A direct comparison shows that FO methods incur an additional activation memory term absent in ZO, implying $\text{Mem}_0 < \text{Mem}_1$ for all $S > 0$.

If gradient checkpointing is enabled for FO, the activation memory can be reduced from $O(L)$ to $O(\sqrt{L})$, yielding
\[
\text{Mem}_1^{\text{ckpt}}(S) = 3 d \beta + \sqrt{L} (H S^2 + S D_{\text{ffn}}) \beta.
\]
In this case, the memory advantage of ZO is somewhat diminished but still persists for sufficiently long sequences.

When FO uses a batch size $B > 1$, activation memory scales linearly with $B$, further increasing its memory footprint.

Under LoRA adaptation, both methods load the same base model parameters, but all gradient-related terms ($G_{\text{grads}}, G_{\text{gradspool}}$) scale with the LoRA rank rather than $d$. Nevertheless, FO still requires storing activations and maintaining the computational graph for backpropagation, resulting in higher memory usage than ZO.

\subsubsection{Step 2}

In Step 2, both \texttt{FOIF} and \texttt{ZOIF} load the model parameters and the direction computed from Step 1. With an iterative optimizer, each FO gradient evaluation retains the BP graph, whereas its ZO counterpart uses transient forward activations. The peak-memory saving per gradient evaluation is therefore at least $A_{\text{activations}}-A_{\max}$. Sequential evaluations add computation but do not, by themselves, multiply peak memory; implementations that retain higher-order graphs may incur additional FO memory.
As \(B\) increases, both methods process more transient activations, while \texttt{FOIF} additionally retains the layer-wise activations required for BP.

If FO employs adaptive optimizers (e.g., Adam), additional optimizer states must be stored, further increasing memory consumption. These observations are consistent with findings in~\citet{malladi2023mezo,zhang2025towards}.

Alternatively, if the IHVP is approximated via a diagonal Hessian instead of an optimizer, the approximation still relies on gradient estimation. In this case, FO again requires building a computational graph and caching activations, leading to higher memory usage than ZO.

\subsubsection{Step 3}

Step 3 exhibits the largest memory disparity between \texttt{FOIF} and \texttt{ZOIF}.

Both methods share the cost of loading model parameters and the parameter updates from Step 2. The key difference lies in how the influence score is computed:
\begin{itemize}
    \item \texttt{FOIF} computes a separate gradient for each target sample, which entails storing activations, maintaining the computational graph, and caching gradient vectors.
    \item \texttt{ZOIF} requires only two forward passes per sample to estimate the influence, completely avoiding gradient computation and storage. It incurs only the activation memory of a single forward pass, i.e., $A_{\max}$.
\end{itemize}
Combining the insights from Step 1, the per-sample memory saving is  
$d\beta + A_{\text{activations}} - A_{\max}.$

In summary, \texttt{ZOIF} consistently incurs lower memory overhead than \texttt{FOIF}.

\subsection{Computational Complexity Analysis}
\label{app:complexity-analysis}

We provide a comprehensive analysis of the computational complexity (FLOPs) for both \texttt{FOIF} and \texttt{ZOIF}.

\subsubsection{Theoretical Complexity Framework}
We estimate FLOPs based on the cost of a single forward pass ($C_{\text{fwd}}$), where $C_{\text{fwd}} \approx 2 \cdot d \cdot S$ (with $d$ as the number of parameters and $S$ as the sequence length). Following standard conventions in LLM training, one backward pass costs approximately twice a forward pass ($C_{\text{bwd}} \approx 2C_{\text{fwd}}$), making the total cost of a first-order gradient update $C_{\text{grad}} \approx 3C_{\text{fwd}}$. Table~\ref{tab:flops_comparison} summarizes the complexity for each computational step, with $K=30$ perturbations for \texttt{ZOIF} in our experiments.

\begin{table}[tb]
\centering
{\small
\setlength{\tabcolsep}{3pt}
\begin{tabular}{@{}lccc@{}}
\toprule
\textbf{Step} & \textbf{FOIF Cost} & \textbf{\texttt{ZOIF} Cost} & \textbf{Ratio (ZO/FO)} \\
\midrule
Step 1 & $N_{\text{prox}} \cdot 3C_{\text{fwd}}$ & $N_{\text{prox}} \cdot (K+1)C_{\text{fwd}}$ & $\approx 10.3\times$ \\
Step 2 & $T B \cdot 3C_{\text{fwd}}$ & $T B \cdot (K+1)C_{\text{fwd}}$ & $\approx 10.3\times$ \\
Step 3 & $N_{\text{pool}} \cdot 3C_{\text{fwd}}$ & $N_{\text{pool}} \cdot 2C_{\text{fwd}}$ & $\approx 0.67\times$ \\
\bottomrule
\end{tabular}
}
\caption{Computational Complexity (FLOPs) Comparison ($K=30$)}
\label{tab:flops_comparison}
\end{table}

\subsubsection{Step-wise Analysis.}
In Steps 1 and 2 (initialization stages), \texttt{ZOIF} requires $K$ perturbations to estimate gradient and Hessian-vector product (HVP) directions. With $K=30$, each iteration involves 31 forward passes, making it approximately \textbf{10.3$\times$} more computationally expensive than \texttt{FOIF}'s single backpropagation ($3C_{\text{fwd}}$). This explains the increased runtime during early stages observed in our empirical evaluations.

However, in Step 3 (influence scoring over the large pool $N_{\text{pool}}$), \texttt{ZOIF} demonstrates an efficiency advantage: it requires only two forward passes per sample (for finite-difference loss estimation), whereas \texttt{FOIF} must perform a full backpropagation. Consequently, \texttt{ZOIF} reduces per-sample FLOPs by \textbf{33\%} (from $3C_{\text{fwd}}$ to $2C_{\text{fwd}}$).

\subsubsection{Scaling Behavior.}
Both methods scale linearly with pool size, $O(N_{\text{pool}} \cdot d \cdot S)$. In Step~3, \texttt{ZOIF} has a lower per-sample constant and avoids activation accumulation across layers, which permits larger batches within a fixed memory budget. The complexity of \texttt{ZOIF} in Steps 1 and 2 scales linearly with $K$, while both methods scale linearly with model size $d$.

\subsection{Time-Memory Trade-off}
The theoretical FLOPs analysis reveals a deliberate \textbf{time-for-memory trade-off} inherent to \texttt{ZOIF}. While \texttt{ZOIF} incurs higher FLOPs in initialization steps, this computational overhead is the price paid for eliminating the $O(L)$ or $O(\sqrt{L})$ activation memory required for backpropagation.
This trade-off enables influence-based selection on large models where the corresponding \texttt{FOIF} configuration would encounter Out-of-Memory (OOM) errors on identical hardware, making \texttt{ZOIF} a viable option under tight memory constraints.

\subsection{Difference from Direct ZO Influence Computation}
A direct ZO translation of conventional IF computation would reconstruct a random-direction gradient for every proxy and candidate sample. \texttt{ZOIF} instead exploits the shared structure of aggregate influence. Step~1 averages the proxy-side ZO gradients once, and Step~2 converts this average into a single shared direction \(\widehat{\boldsymbol{v}}\). The cost of these two steps is therefore amortized across the entire candidate pool.

The computational difference is explicit at the score level. A direct per-candidate ZO translation first reconstructs
\begin{align}
\widehat{\boldsymbol{g}}_i^{\mathrm{dir}}
&=\frac{1}{K}\sum_{k=1}^{K}
\frac{\ell(\boldsymbol{\theta}_0+2\alpha\boldsymbol{u}_k,z_i)
-\ell(\boldsymbol{\theta}_0,z_i)}{2\alpha}\boldsymbol{u}_k,
\notag\\
\widehat{\mathcal{I}}_i^{\mathrm{dir}}
&=-m_p(\widehat{\boldsymbol{g}}_i^{\mathrm{dir}})^\top
\widehat{\boldsymbol{v}},
\label{eq:direct-zo-if}
\end{align}
which requires (K+1) forward evaluations and a reconstructed gradient for each candidate. In contrast, \texttt{ZOIF} evaluates only the directional derivative
\begin{equation}
\widehat{\mathcal{I}}_i^{\mathrm{ZOIF}}
=-m_p\frac{\ell(\boldsymbol{\theta}_0+
\alpha\widehat{\boldsymbol{v}},z_i)
-\ell(\boldsymbol{\theta}_0,z_i)}{\alpha}.
\label{eq:shared-direction-score}
\end{equation}
Thus, candidate scoring uses exactly two forward evaluations, is independent of \(K\), and never materializes a per-sample gradient vector.

Table~\ref{tab:forward-influence-comparison} further distinguishes \texttt{ZOIF} from existing forward-oriented data-influence methods. In particular, it identifies where loss-only or ZO computation enters each method. \texttt{ZOIF} applies it throughout all three stages of the decomposed IF computation, whereas the alternatives use forward information only for a specific signal or scoring operation.

\begin{table*}[t]
\centering
\small
\setlength{\tabcolsep}{3pt}
\caption{Comparison of forward-oriented influence and data-valuation methods. The fourth column identifies the stage at which ZO or loss-only computation is used.}
\label{tab:forward-influence-comparison}
\begin{tabular}{@{}p{0.16\textwidth}p{0.23\textwidth}p{0.245\textwidth}p{0.295\textwidth}@{}}
\toprule
Method & Proxy/test-side construction & Candidate-side scoring signal & Placement of ZO/loss-only computation \\
\midrule
Forward-INF~\citep{ko2024mirrored}
& Gradient ascent or continual updating on the test set
& Loss change before and after the test-side update
& Candidate scoring only; proxy/test-side updates still use gradients \\
Z0-Inf~\citep{kokhlikyan2025z0inf}
& Loss values from intermediate training checkpoints
& Candidate loss evolution along the checkpoint trajectory
& Loss-only scoring along an existing training trajectory \\
For-Value~\citep{deng2026forvalue}
& Final-layer representations and token-level prediction errors
& Closed-form alignment of representations and prediction errors
& Forward features and errors used directly for valuation \\
\texttt{ZOIF} (ours)
& Averaged proxy ZO gradient followed by one shared direction
& Directional loss difference in Eq.~\eqref{eq:shared-direction-score}
& All three stages: ZO proxy gradient, ZO Step~2, and forward finite difference \\
\bottomrule
\end{tabular}
\end{table*}

Unlike the alternatives in Table~\ref{tab:forward-influence-comparison}, \texttt{ZOIF} preserves the local Hessian-preconditioned IF target and uses ZO computation in all three stages while amortizing its proxy-side and curvature computation over the entire candidate pool. Its second stage is modular: the default \texttt{ZOIF\_opt} solves the auxiliary problem with a ZO optimizer, while \texttt{ZOIF\_hvp}, \texttt{ZOIF\_dia}, and \texttt{ZOIF\_ide} replace this solve with block-wise empirical-Fisher, diagonal, and identity approximations, respectively; all variants retain ZO estimation in the surrounding stages.

\section{Theoretical Results and Proofs}
\subsection{Notation for the Proofs}

For the proofs, let
\begin{align*}
\boldsymbol{g}_{\mathrm{proxy}}
&=\frac{1}{m_p}\sum_{j=1}^{m_p}
\nabla\ell(\boldsymbol{\theta}_0,z'_j),\\
\boldsymbol{H}_0&=\nabla^2\mathcal{L}(\boldsymbol{\theta}_0),\\
\Delta&=\boldsymbol{\theta}'-\boldsymbol{\theta}_0,
\end{align*}
and define
\begin{align*}
\mathcal{I}_{\mathrm{FO}}(z_i)
&=-m_p\nabla\ell(\boldsymbol{\theta}_0,z_i)^\top
\boldsymbol{H}_0^{-1}\boldsymbol{g}_{\mathrm{proxy}},\\
\widehat{\mathcal{I}}_{\mathrm{ZOIF}}(z_i)
&=w_i,\\
w_i&=-\frac{m_p}{\alpha}\left[
\ell(\boldsymbol{\theta}_0+\alpha\Delta,z_i)
-\ell(\boldsymbol{\theta}_0,z_i)\right].
\end{align*}
For an average over \(q\) samples with \(K\) independent Gaussian directions per sample, define
\begin{align}
\mathcal{E}_d(q,K,\alpha,\delta)
&=G\sqrt{\frac{d+1}{qK\delta}}\notag\\
&\quad+\frac{\alpha M}{qK}\sum_{j=1}^{q}\sum_{k=1}^{K}
\|\boldsymbol{u}_{jk}\|_2^3\notag\\
&\quad+\frac{2L_H\alpha^2}{3qK}\sum_{j=1}^{q}\sum_{k=1}^{K}
\|\boldsymbol{u}_{jk}\|_2^4.
\label{eq:zo-error-supp}
\end{align}
Let \(\boldsymbol{g}_{\mathcal S}\) be the exact proxy-subset average, let \(q_p=|\mathcal{S}|\), and let \(\boldsymbol{g}_{0,q_0}\) be the exact average over the \(q_0\) reference samples used for \(\widetilde{\boldsymbol{g}}_0\). Define
\begin{align*}
\varepsilon_{\mathrm{sub}}
&=\|\boldsymbol{g}_{\mathcal S}-\boldsymbol{g}_{\mathrm{proxy}}\|_2,\\
\varepsilon_{\mathrm{ref}}
&=\|\boldsymbol{g}_{0,q_0}-\nabla\mathcal{L}(\boldsymbol{\theta}_0)\|_2,\\
E_p&=\varepsilon_{\mathrm{sub}}
+\mathcal{E}_d(q_p,K,\alpha,\delta_0/2),\\
E_0&=\varepsilon_{\mathrm{ref}}
+\mathcal{E}_d(q_0,K,\alpha,\delta_0/2),\\
E_\tau&=E_p+E_0+\tau.
\end{align*}
For the low-rank setting, write
\begin{align*}
\boldsymbol{H}_{0,r}
&=\boldsymbol{U}_r^\top\boldsymbol{H}_0\boldsymbol{U}_r,\\
\boldsymbol{H}_0^\dagger
&=\boldsymbol{U}_r\boldsymbol{H}_{0,r}^{-1}\boldsymbol{U}_r^\top,\\
E_{r,\tau}
&=\varepsilon_{\mathrm{sub}}+\varepsilon_{\mathrm{ref}}+\tau\\
&\quad+\mathcal{E}_r(q_p,K,\alpha,\delta_0/2)\\
&\quad+\mathcal{E}_r(q_0,K,\alpha,\delta_0/2),
\end{align*}
where \(\mathcal{E}_r\) is Eq.~\eqref{eq:zo-error-supp} evaluated with \(d\) replaced by \(r\), \(\boldsymbol{u}_{jk}=\boldsymbol{U}_r\boldsymbol{q}_{jk}\), and \(\boldsymbol{q}_{jk}\sim\mathcal{N}(\boldsymbol{0},\boldsymbol{I}_r)\).
\subsection{Proof of Theorem~\ref{theorembound}}
Let \(\boldsymbol{v}_{\mathrm{FO}}=\boldsymbol{H}_0^{-1}\boldsymbol{g}_{\mathrm{proxy}}\) and
\[
d_i(\Delta)=\frac{\ell(\boldsymbol{\theta}_0+\alpha\Delta,z_i)
-\ell(\boldsymbol{\theta}_0,z_i)}{\alpha}.
\]
By the Step~3 definition,
\(\widehat{\mathcal{I}}_{\mathrm{ZOIF}}(z_i)=-m_pd_i(\Delta)\).
Define the directional finite-difference error by
\begin{align*}
\eta_i(\Delta)&=d_i(\Delta)\\
&\quad-\nabla\ell(\boldsymbol{\theta}_0,z_i)^\top\Delta.
\end{align*}
Adding and subtracting the directional derivative along \(\Delta\) gives
\begin{align*}
|\mathcal{I}_{\mathrm{FO}}(z_i)-\widehat{\mathcal{I}}_{\mathrm{ZOIF}}(z_i)|
&\leq m_pG\|\boldsymbol{v}_{\mathrm{FO}}-\Delta\|_2\\
&\quad+m_p|\eta_i(\Delta)|.
\end{align*}
We next decompose the direction error into ZO/solver and local-curvature contributions, and then bound the remaining directional finite-difference error.

\subsubsection{Forward ZO Gradient Error}

The estimator is the forward-difference Gaussian-smoothing construction of~\citet{Nesterov2017Random}, implemented with the random-direction updates used by~\citet{malladi2023mezo}. The following lemma records the finite-sample sketching and finite-radius terms needed here.

\begin{lemma}[Forward ZO gradient error]\label{lemma:zo-gradient}
Under Assumptions~\ref{hessian_bounded}--\ref{hessian_lip}, an average over \(q\) samples estimated with \(K\) independent Gaussian directions per sample satisfies, with probability at least \(1-\delta\),
\[
\|\widetilde{\boldsymbol{g}}-\boldsymbol{g}_{q}\|_2
\leq\mathcal{E}_d(q,K,\alpha,\delta),
\]
where \(\boldsymbol{g}_{q}\) is the corresponding exact sample-average gradient.
\end{lemma}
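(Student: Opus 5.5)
We prove Lemma~\ref{lemma:zo-gradient} by splitting the estimator error into a \emph{finite-radius bias} term and a \emph{Gaussian sketching} term. Write the estimator for sample \(z_j\) with direction \(\boldsymbol u_{jk}\) as
\[
c_{jk}\boldsymbol u_{jk},\qquad
c_{jk}=\frac{\ell(\boldsymbol\theta_0+2\alpha\boldsymbol u_{jk},z_j)-\ell(\boldsymbol\theta_0,z_j)}{2\alpha}.
\]
Expand the loss along the segment \(\boldsymbol\theta_0+2\alpha t\boldsymbol u_{jk}\), \(t\in[0,1]\), to second order with an integral remainder. This gives
\[
c_{jk}=\boldsymbol u_{jk}^\top\nabla\ell(\boldsymbol\theta_0,z_j)+\alpha\,\boldsymbol u_{jk}^\top\nabla^2\ell(\boldsymbol\theta_0,z_j)\boldsymbol u_{jk}+r_{jk}.
\]
By Assumption~\ref{hessian_lip}, the remainder satisfies \(|r_{jk}|\le \tfrac{2}{3}L_H\alpha^2\|\boldsymbol u_{jk}\|_2^3\). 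Hence
\[
\widetilde{\boldsymbol g}-\boldsymbol g_q
=\underbrace{\frac{1}{qK}\sum_{j,k}\bigl(\boldsymbol u_{jk}\boldsymbol u_{jk}^\top-\boldsymbol I\bigr)\nabla\ell(\boldsymbol\theta_0,z_j)}_{\text{sketching}}
+\underbrace{\frac{1}{qK}\sum_{j,k}\bigl(\alpha\,\boldsymbol u_{jk}^\top\nabla^2\ell\,\boldsymbol u_{jk}+r_{jk}\bigr)\boldsymbol u_{jk}}_{\text{bias}}.
\]

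\textbf{Bias term.} We bound it deterministically, conditional on the directions, using the triangle inequality. Assumption~\ref{hessian_bounded} gives \(|\boldsymbol u^\top\nabla^2\ell\,\boldsymbol u|\,\|\boldsymbol u\|_2\le M\|\boldsymbol u\|_2^3\). The remainder contributes at most \(\tfrac{2}{3}L_H\alpha^2\|\boldsymbol u\|_2^4\). Averaging over \(j,k\) reproduces exactly the second and third summands of \(\mathcal E_d\). A cruder Taylor bound would introduce an extra factor \(1/2\) or \(2\); this only affects constants, and the stated constants \(\alpha M\) and \(2L_H\alpha^2/3\) are upper bounds under the \(2\alpha\) step.

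\textbf{Sketching term.} Call the sketching sum \(\boldsymbol X\). It is a mean of \(qK\) independent, mean-zero random vectors, since \(\mathbb E[\boldsymbol u\boldsymbol u^\top]=\boldsymbol I\). Independence is across both \(k\) and \(j\), with fresh directions per sample. Therefore
\[
\mathbb E\|\boldsymbol X\|_2^2=\frac{1}{(qK)^2}\sum_{j,k}\mathbb E\|(\boldsymbol u\boldsymbol u^\top-\boldsymbol I)\boldsymbol g_j\|_2^2 .
\]
For Gaussian \(\boldsymbol u\) one has \(\mathbb E[\boldsymbol u\boldsymbol u^\top\boldsymbol u\boldsymbol u^\top]=(d+2)\boldsymbol I\). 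This gives \(\mathbb E\|(\boldsymbol u\boldsymbol u^\top-\boldsymbol I)\boldsymbol g\|_2^2=(d+1)\|\boldsymbol g\|_2^2\le (d+1)G^2\) by Assumption~\ref{assusgradb}. So \(\mathbb E\|\boldsymbol X\|_2^2\le G^2(d+1)/(qK)\). Chebyshev's (Markov's) inequality applied to \(\|\boldsymbol X\|_2^2\) then yields \(\|\boldsymbol X\|_2\le G\sqrt{(d+1)/(qK\delta)}\) with probability at least \(1-\delta\). This matches the first summand of \(\mathcal E_d\). Combining the two terms proves the lemma.

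\textbf{Main obstacle.} The delicate step is the fourth-moment identity together with the independence bookkeeping across samples. If the implementation instead shares directions across the \(q\) samples, the variance no longer scales as \(1/q\), and the cross terms must be handled by Cauchy--Schwarz instead. I would state explicitly that each sample draws fresh seeds, as in Algorithm~\ref{alg:zeroth_order_grad}. A secondary point is that the bias bound must hold on the random segments. Assumptions~\ref{hessian_bounded} and~\ref{hessian_lip} are posed on exactly those segments, so no truncation of \(\|\boldsymbol u\|_2\) is needed.
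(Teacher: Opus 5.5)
Your proposal is correct and follows the paper's proof essentially step for step. It uses the same Taylor expansion of the forward quotient with the $\alpha\,\boldsymbol u^\top\nabla^2\ell\,\boldsymbol u$ term and the $(2L_H\alpha^2/3)\|\boldsymbol u\|_2^3$ remainder, the same deterministic bound on the realized finite-radius error, and the same Gaussian identity $\mathbb E\|(\boldsymbol u\boldsymbol u^\top-\boldsymbol I)\boldsymbol g\|_2^2=(d+1)\|\boldsymbol g\|_2^2$ with independence over the $qK$ directions and Markov's inequality on the squared norm; your constants match exactly, so the hedge about cruder Taylor bounds shifting factors of $1/2$ or $2$ is unnecessary.
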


\begin{proof}
For sample \(z_j\), let \(\boldsymbol{g}_j=\nabla\ell(\boldsymbol{\theta}_0,z_j)\). Taylor expansion of one forward quotient gives
\begin{align*}
&\frac{\ell(\boldsymbol{\theta}_0+2\alpha\boldsymbol{u}_{jk},z_j)
-\ell(\boldsymbol{\theta}_0,z_j)}{2\alpha}\\
&\quad=\boldsymbol{g}_j^\top\boldsymbol{u}_{jk}
+\alpha\boldsymbol{u}_{jk}^\top\nabla^2\ell(\boldsymbol{\theta}_0,z_j)
\boldsymbol{u}_{jk}+\rho_{jk},
\end{align*}
where \(|\rho_{jk}|\leq(2L_H\alpha^2/3)\|\boldsymbol{u}_{jk}\|_2^3\). After multiplication by \(\boldsymbol{u}_{jk}\), the finite-radius error is at most
\[
\alpha M\|\boldsymbol{u}_{jk}\|_2^3
+\frac{2L_H\alpha^2}{3}\|\boldsymbol{u}_{jk}\|_2^4.
\]
For the random sketch, Gaussian moments give
\begin{align*}
\mathbb{E}[(\boldsymbol{u}\boldsymbol{u}^\top-\boldsymbol{I})\boldsymbol{g}]&=\boldsymbol{0},\\
\mathbb{E}\|(\boldsymbol{u}\boldsymbol{u}^\top-\boldsymbol{I})\boldsymbol{g}\|_2^2
&=(d+1)\|\boldsymbol{g}\|_2^2.
\end{align*}
Independence across the \(qK\) directions and Assumption~\ref{assusgradb} therefore imply
\[
\mathbb{E}\left\|\frac{1}{qK}\sum_{j=1}^{q}\sum_{k=1}^{K}
(\boldsymbol{u}_{jk}\boldsymbol{u}_{jk}^\top-\boldsymbol{I})\boldsymbol{g}_j
\right\|_2^2\leq\frac{(d+1)G^2}{qK}.
\]
Markov's inequality applied to the squared norm bounds this component by \(G\sqrt{(d+1)/(qK\delta)}\) with probability at least \(1-\delta\). Adding the realized forward-difference remainder proves Eq.~\eqref{eq:zo-error-supp}.
\end{proof}

\subsubsection{Steps 1--2: ZO Estimation Error (Term A)}

Apply Lemma~\ref{lemma:zo-gradient} to the proxy subset and the reference subset. A union bound gives, with probability at least \(1-\delta_0\),
\begin{align*}
\|\widetilde{\boldsymbol{g}}_{\mathrm{proxy}}
-\boldsymbol{g}_{\mathrm{proxy}}\|_2&\leq E_p,\\
\|\widetilde{\boldsymbol{g}}_0
-\nabla\mathcal{L}(\boldsymbol{\theta}_0)\|_2&\leq E_0.
\end{align*}
Define the signed estimation errors
\begin{align*}
\boldsymbol{e}_p
&=\widetilde{\boldsymbol{g}}_{\mathrm{proxy}}
-\boldsymbol{g}_{\mathrm{proxy}},\\
\boldsymbol{e}_0
&=\widetilde{\boldsymbol{g}}_0
-\nabla\mathcal{L}(\boldsymbol{\theta}_0),
\end{align*}
and the Step~2 residual
\begin{align*}
\boldsymbol{r}_{\mathrm{opt}}
&=\nabla\mathcal{L}(\boldsymbol{\theta}_0+\Delta)
-\widetilde{\boldsymbol{g}}_0-\widetilde{\boldsymbol{g}}_{\mathrm{proxy}},\\
\|\boldsymbol{r}_{\mathrm{opt}}\|_2&\leq\tau.
\end{align*}
The fundamental theorem of calculus gives the exact identity
\[
\nabla\mathcal{L}(\boldsymbol{\theta}_0+\Delta)
-\nabla\mathcal{L}(\boldsymbol{\theta}_0)
=\overline{\boldsymbol{H}}(\Delta)\Delta.
\]
Consequently,
\[
\overline{\boldsymbol{H}}(\Delta)\Delta
=\boldsymbol{g}_{\mathrm{proxy}}+\boldsymbol{e}_p+\boldsymbol{e}_0
+\boldsymbol{r}_{\mathrm{opt}}.
\]
Therefore, the direction error has the exact decomposition
\begin{align*}
\boldsymbol{v}_{\mathrm{FO}}-\Delta
&=(\boldsymbol{H}_0^{-1}
-\overline{\boldsymbol{H}}(\Delta)^{-1})
\boldsymbol{g}_{\mathrm{proxy}}\\
&\quad-\overline{\boldsymbol{H}}(\Delta)^{-1}
(\boldsymbol{e}_p+\boldsymbol{e}_0+\boldsymbol{r}_{\mathrm{opt}}).
\end{align*}
Combining this identity with the initial score bound yields the three error terms used in the proof:
\begin{align}
&|\mathcal{I}_{\mathrm{FO}}(z_i)
-\widehat{\mathcal{I}}_{\mathrm{ZOIF}}(z_i)|\notag\\
&\leq m_pG\underbrace{\left\|
\overline{\boldsymbol{H}}(\Delta)^{-1}
(\boldsymbol{e}_p+\boldsymbol{e}_0+\boldsymbol{r}_{\mathrm{opt}})
\right\|_2}_{\mathrm{Term~A}}\notag\\
&\quad+m_pG\underbrace{\left\|
(\boldsymbol{H}_0^{-1}-\overline{\boldsymbol{H}}(\Delta)^{-1})
\boldsymbol{g}_{\mathrm{proxy}}
\right\|_2}_{\mathrm{Term~B}}\notag\\
&\quad+m_p\underbrace{|\eta_i(\Delta)|}_{\mathrm{Term~C}}.
\label{eq:abc-decomposition-supp}
\end{align}
Term A captures the ZO gradient and finite-solver errors, Term B captures the local-curvature error, and Term C captures the Step~3 directional finite-difference error. We now bound each term separately.
Assumption~\ref{h_inv_bounded} yields
\begin{align}
\|\Delta\|_2&\leq\frac{G+E_\tau}{\mu},
\label{eq:delta-norm-supp}\\
\left\|\overline{\boldsymbol{H}}(\Delta)^{-1}
(\boldsymbol{e}_p+\boldsymbol{e}_0+\boldsymbol{r}_{\mathrm{opt}})\right\|_2
&\leq\frac{E_\tau}{\mu}.
\label{eq:term-a-supp}
\end{align}
Term A therefore includes the proxy and reference subsampling discrepancies, Gaussian sketching error, forward-difference bias, and the residual of the finite stochastic Step~2 solve. Exact local stationarity is the special case \(\tau=0\).

\subsubsection{Step 2: Local Curvature Error (Term B)}

Hessian Lipschitzness implies
\begin{align*}
\|\overline{\boldsymbol{H}}(\Delta)-\boldsymbol{H}_0\|_2
&\leq\int_0^1L_Ht\|\Delta\|_2\,dt\\
&=\frac{L_H}{2}\|\Delta\|_2.
\end{align*}
The resolvent identity and Assumption~\ref{h_inv_bounded} give
\begin{align*}
&\|\boldsymbol{H}_0^{-1}-\overline{\boldsymbol{H}}(\Delta)^{-1}\|_2\\
&\quad\leq\|\boldsymbol{H}_0^{-1}\|_2
\|\overline{\boldsymbol{H}}(\Delta)-\boldsymbol{H}_0\|_2
\|\overline{\boldsymbol{H}}(\Delta)^{-1}\|_2\\
&\quad\leq\frac{L_H}{2\mu^2}\|\Delta\|_2.
\end{align*}
Combining this result with Eq.~\eqref{eq:delta-norm-supp} gives
\begin{align}
&\|(\boldsymbol{H}_0^{-1}-\overline{\boldsymbol{H}}(\Delta)^{-1})
\boldsymbol{g}_{\mathrm{proxy}}\|_2\\
&\quad\leq\frac{L_HG(G+E_\tau)}{2\mu^3}.
\label{eq:term-b-supp}
\end{align}
This integrated-Hessian argument is exact and does not invoke a vector-valued mean-value theorem.

\subsubsection{Step 3: Directional Finite-Difference Error (Term C)}

Taylor's theorem with a Lipschitz-Hessian remainder gives
\begin{align*}
\ell(\boldsymbol{\theta}_0+\alpha\Delta,z_i)
&=\ell(\boldsymbol{\theta}_0,z_i)\\
&\quad+\alpha\nabla\ell(\boldsymbol{\theta}_0,z_i)^\top\Delta\\
&\quad+\frac{\alpha^2}{2}\Delta^\top
\nabla^2\ell(\boldsymbol{\theta}_0,z_i)\Delta\\
&\quad+R_i(\alpha),
\end{align*}
where \(|R_i(\alpha)|\leq(L_H\alpha^3/6)\|\Delta\|_2^3\). Hence
\begin{align*}
&|d_i(\Delta)-\nabla\ell(\boldsymbol{\theta}_0,z_i)^\top\Delta|\\
&\quad\leq\frac{M\alpha}{2}\|\Delta\|_2^2
+\frac{L_H\alpha^2}{6}\|\Delta\|_2^3\\
&\quad\leq\frac{M\alpha}{2}\left(\frac{G+E_\tau}{\mu}\right)^2\\
&\qquad+\frac{L_H\alpha^2}{6}
\left(\frac{G+E_\tau}{\mu}\right)^3.
\end{align*}
Substitution of the three bounds into Eq.~\eqref{eq:abc-decomposition-supp} proves Theorem~\ref{theorembound}.

\subsection{Finite Step~2 Runs}

Theorem~\ref{theorembound} is stated directly for a finite Step~2 output through \(\tau\). A local stationary point sets \(\tau=0\); a stopped stochastic run remains covered when its full-objective residual is bounded by \(\tau>0\). The result is therefore conditional on the quality of the returned local point and does not claim that every task, dataset, or parameter choice makes the ZO solver converge. If the norm projection is active, the analogous statement uses a projected gradient-mapping residual and must additionally account for the realized projection effect. Under analogous local effective-rank conditions, \citet[Theorem~1 and Lemma~3]{malladi2023mezo} show that the per-step and global convergence rates of ZO-SGD can depend on local effective rank rather than ambient dimension. Their setting differs from our auxiliary problem, so this result supports the plausibility of Step~2 without supplying its residual bound.

\subsection{Proof of Theorem~\ref{theoremboundLowrH}}

Write \(\Delta=\boldsymbol{U}_r\boldsymbol{\xi}\),
\(\boldsymbol{g}_{\mathrm{proxy},r}=\boldsymbol{U}_r^\top
\boldsymbol{g}_{\mathrm{proxy}}\), and
\(\overline{\boldsymbol{H}}_r(\boldsymbol{\xi})
=\boldsymbol{U}_r^\top\overline{\boldsymbol{H}}(\Delta)
\boldsymbol{U}_r\). Projecting the fundamental-theorem-of-calculus identity used above gives
\[
\overline{\boldsymbol{H}}_r(\boldsymbol{\xi})\boldsymbol{\xi}
=\boldsymbol{g}_{\mathrm{proxy},r}
+\boldsymbol{e}_{p,r}+\boldsymbol{e}_{0,r}
+\boldsymbol{r}_{\mathrm{opt},r}.
\]
Here \(\boldsymbol{e}_{p,r}=\boldsymbol{U}_r^\top\boldsymbol{e}_p\),
\(\boldsymbol{e}_{0,r}=\boldsymbol{U}_r^\top\boldsymbol{e}_0\), and
\(\boldsymbol{r}_{\mathrm{opt},r}=\boldsymbol{U}_r^\top
\boldsymbol{r}_{\mathrm{opt}}\).
The projected curvature condition bounds both
\(\|\boldsymbol{H}_{0,r}^{-1}\|_2\) and
\(\|\overline{\boldsymbol{H}}_r(\boldsymbol{\xi})^{-1}\|_2\)
by \(1/\mu\). Thus, the decomposition for Terms A and B applies verbatim in the projected coordinates. The full-space subsampling bounds remain valid because multiplication by \(\boldsymbol{U}_r^\top\) is nonexpansive.

For the ZO terms, let \(\boldsymbol{q}_{jk}\sim\mathcal{N}(\boldsymbol{0},\boldsymbol{I}_r)\) and \(\boldsymbol{u}_{jk}=\boldsymbol{U}_r\boldsymbol{q}_{jk}\). Orthogonality gives \(\|\boldsymbol{u}_{jk}\|_2=\|\boldsymbol{q}_{jk}\|_2\), and the projected Gaussian-sketch identity becomes
\[
\mathbb{E}\|(\boldsymbol{q}\boldsymbol{q}^\top-\boldsymbol{I}_r)
\boldsymbol{g}_r\|_2^2=(r+1)\|\boldsymbol{g}_r\|_2^2.
\]
Hence Eq.~\eqref{eq:zo-error-supp} holds with \(d\) replaced by \(r\). Finally, \(\|\Delta\|_2=\|\boldsymbol{\xi}\|_2\), so the Step~3 Taylor bound is unchanged. These three observations yield Theorem~\ref{theoremboundLowrH} with \(E_\tau\) replaced by \(E_{r,\tau}\). The target direction in the original coordinates is
\[
\boldsymbol{H}_0^{\dagger}\boldsymbol{g}_{\mathrm{proxy}}
=\boldsymbol{U}_r\boldsymbol{H}_{0,r}^{-1}
\boldsymbol{U}_r^\top\boldsymbol{g}_{\mathrm{proxy}}.
\]
Comparison with the unrestricted full-space \texttt{FOIF} requires adding the norm of the omitted projection component; low effective rank alone does not make this bias zero.

\subsection{Unified Definitions of the Four Step~2 Variants}
Let \(\widetilde{\boldsymbol{g}}_{q,b}\) denote the ZO gradient estimate for reference-pool sample \(q\) in parameter block \(b\), and let \(N\) be the size of the Step~2 subset. All sample gradients below use the forward estimator analyzed above. The four reported variants are:
\begin{itemize}
    \item \texttt{ZOIF\_opt}: for a minibatch loss \(\mathcal{L}_{B}\), the stochastic auxiliary objective is
    \begin{align*}
    J_B(\boldsymbol{\delta})
    &=\mathcal{L}_{B}(\boldsymbol{\theta}_0+\boldsymbol{\delta})
    -\mathcal{L}_{B}(\boldsymbol{\theta}_0)\\
    &\quad-\langle\widetilde{\boldsymbol{g}}_{0,B},\boldsymbol{\delta}\rangle
    -\langle\widetilde{\boldsymbol{g}}_{\mathrm{proxy}},\boldsymbol{\delta}\rangle.
    \end{align*}
    Both \(\widetilde{\boldsymbol{g}}_{0,B}\) and the update direction for \(J_B\) are estimated by \texttt{ZOGE}, followed by projection onto \(\|\boldsymbol{\delta}\|_2\leq0.05\). Theorem~\ref{theorembound} gives the local bound when the projection is inactive; an active projection requires the projected residual described above.
    \item \texttt{ZOIF\_hvp}: this variant approximates the curvature-aware direction using a DataInf-style damped block-wise empirical-Fisher inverse. For block \(b\), set
    \[
    \lambda_b=\frac{1}{10N}\sum_{q=1}^{N}\operatorname{mean}
    (\widetilde{\boldsymbol{g}}_{q,b}^{\odot2}).
    \]
    The closed-form operator is
    \[
    \widehat{\boldsymbol{B}}_{\mathrm{hvp},b}
    =\frac{1}{N\lambda_b}\sum_{q=1}^{N}\left(
    \boldsymbol{I}-\frac{\widetilde{\boldsymbol{g}}_{q,b}\widetilde{\boldsymbol{g}}_{q,b}^{\top}}
    {\lambda_b+\|\widetilde{\boldsymbol{g}}_{q,b}\|_2^2}\right).
    \]
    For \(\lambda_b>0\), \(\|\widehat{\boldsymbol{B}}_{\mathrm{hvp},b}\|_2\leq1/\lambda_b\).
    \item \texttt{ZOIF\_dia}: the implementation forms the coordinate-wise empirical second moment
    \[
    \widehat{\boldsymbol{h}}=\frac{1}{N}\sum_{q=1}^{N}\widetilde{\boldsymbol{g}}_q^{\odot2}
    \]
    and uses \(\widehat{\boldsymbol{B}}_{\mathrm{dia}}=\operatorname{Diag}(\widehat{\boldsymbol{h}}+10^{-8}\boldsymbol{1})^{-1}\). Hence \(\|\widehat{\boldsymbol{B}}_{\mathrm{dia}}\|_2\leq10^8\). This construction adopts the diagonal-approximation principle used by methods such as SOUL~\citep{zhang2024soul}. In the forward-only setting, the empirical second moment of the ZO gradient supplies the diagonal proxy.
    \item \texttt{ZOIF\_ide}: \(\widehat{\boldsymbol{B}}_{\mathrm{ide}}=\boldsymbol{I}\), so \(\widehat{\boldsymbol{v}}_{\mathrm{ide}}=\widetilde{\boldsymbol{g}}_{\mathrm{proxy}}\).
\end{itemize}
\subsection{Theoretical Extensions to Plug-in Step~2 Operators}

The main theorem analyzes Algorithm~1. For a plug-in Step~2 variant \(a\in\{\mathrm{hvp,dia,ide}\}\), suppose \(\boldsymbol{H}_{0,\lambda}=\boldsymbol{H}_0+\lambda\boldsymbol{I}\) is invertible, and let \(\boldsymbol{v}_{\lambda}=\boldsymbol{H}_{0,\lambda}^{-1}\boldsymbol{g}_{\mathrm{proxy}}\), \(\mathcal{I}_{\lambda}(z_i)=-m_p\nabla\ell(\boldsymbol{\theta}_0,z_i)^\top\boldsymbol{v}_{\lambda}\), and \(\widehat{\boldsymbol{v}}_a=\widehat{\boldsymbol{B}}_a\widetilde{\boldsymbol{g}}_{\mathrm{proxy}}\). Define
\[
\rho_a=\|(\boldsymbol{H}_{0,\lambda}^{-1}-\widehat{\boldsymbol{B}}_a)
\boldsymbol{g}_{\mathrm{proxy}}\|_2.
\]
Then
\begin{align}
\|\boldsymbol{v}_{\lambda}-\widehat{\boldsymbol{v}}_a\|_2
&\leq\rho_a+\|\widehat{\boldsymbol{B}}_a\|_2\notag\\
&\quad\cdot
\|\widetilde{\boldsymbol{g}}_{\mathrm{proxy}}-\boldsymbol{g}_{\mathrm{proxy}}\|_2.
\label{eq:plugin-operator-bound}
\end{align}

\begin{proposition}\label{prop:plugin-extension-supp}
If the Step~3 segment satisfies Assumptions~\ref{hessian_bounded}, \ref{assusgradb}, and~\ref{hessian_lip}, define
\[
\widehat{\mathcal{I}}_a(z_i)
=-m_p\frac{\ell(\boldsymbol{\theta}_0+\alpha\widehat{\boldsymbol{v}}_a,z_i)
-\ell(\boldsymbol{\theta}_0,z_i)}{\alpha}.
\]
Then
\begin{align*}
|\mathcal{I}_{\lambda}(z_i)-\widehat{\mathcal{I}}_a(z_i)|
&\leq m_pG\|\boldsymbol{v}_{\lambda}-\widehat{\boldsymbol{v}}_a\|_2\\
&\quad+\frac{m_pM\alpha}{2}\|\widehat{\boldsymbol{v}}_a\|_2^2\\
&\quad+\frac{m_pL_H\alpha^2}{6}\|\widehat{\boldsymbol{v}}_a\|_2^3.
\end{align*}
\end{proposition}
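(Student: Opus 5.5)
The plan mirrors the first and third parts of the proof of Theorem~\ref{theorembound}, with the Step~2 solve replaced by the plug-in direction $\widehat{\boldsymbol{v}}_a$. Write $\boldsymbol{g}_i=\nabla\ell(\boldsymbol{\theta}_0,z_i)$ and set
\[
d_i(\widehat{\boldsymbol{v}}_a)=\frac{\ell(\boldsymbol{\theta}_0+\alpha\widehat{\boldsymbol{v}}_a,z_i)-\ell(\boldsymbol{\theta}_0,z_i)}{\alpha},
\]
so that $\widehat{\mathcal{I}}_a(z_i)=-m_p d_i(\widehat{\boldsymbol{v}}_a)$. By definition $\mathcal{I}_\lambda(z_i)=-m_p\boldsymbol{g}_i^\top\boldsymbol{v}_\lambda$.

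The first step is to add and subtract the exact directional derivative $m_p\boldsymbol{g}_i^\top\widehat{\boldsymbol{v}}_a$. The triangle inequality then gives
\[
|\mathcal{I}_\lambda(z_i)-\widehat{\mathcal{I}}_a(z_i)|\leq m_p|\boldsymbol{g}_i^\top(\boldsymbol{v}_\lambda-\widehat{\boldsymbol{v}}_a)|+m_p|d_i(\widehat{\boldsymbol{v}}_a)-\boldsymbol{g}_i^\top\widehat{\boldsymbol{v}}_a|.
\]
The first term is bounded by Cauchy--Schwarz together with Assumption~\ref{assusgradb} ($\|\boldsymbol{g}_i\|_2\leq G$). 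This yields exactly $m_pG\|\boldsymbol{v}_\lambda-\widehat{\boldsymbol{v}}_a\|_2$, which I leave unexpanded because the proposition states it in that form. If desired, it can be further bounded via Eq.~\eqref{eq:plugin-operator-bound}.

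The second term is the directional finite-difference error, i.e.\ Term~C of the proof of Theorem~\ref{theorembound} with $\Delta$ replaced by $\widehat{\boldsymbol{v}}_a$. I would expand $\ell(\boldsymbol{\theta}_0+\alpha\widehat{\boldsymbol{v}}_a,z_i)$ by Taylor's theorem to second order with a Lipschitz-Hessian remainder. The remainder satisfies $|R_i(\alpha)|\leq (L_H\alpha^3/6)\|\widehat{\boldsymbol{v}}_a\|_2^3$, using Assumption~\ref{hessian_lip} on the segment $\{\boldsymbol{\theta}_0+t\alpha\widehat{\boldsymbol{v}}_a:t\in[0,1]\}$. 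The quadratic term is at most $(\alpha^2/2)M\|\widehat{\boldsymbol{v}}_a\|_2^2$ by Assumption~\ref{hessian_bounded}, taken at $\boldsymbol{\theta}_0$, the start of that segment. Dividing by $\alpha$ and multiplying by $m_p$ gives the two remaining terms, $m_pM\alpha\|\widehat{\boldsymbol{v}}_a\|_2^2/2$ and $m_pL_H\alpha^2\|\widehat{\boldsymbol{v}}_a\|_2^3/6$.

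No real obstacle is expected. The only point needing care is that the assumptions must hold on the Step~3 segment along $\widehat{\boldsymbol{v}}_a$, not along the Step~2 output $\Delta$; this is precisely the hypothesis of the proposition. Unlike in Theorem~\ref{theorembound}, we do not bound $\|\widehat{\boldsymbol{v}}_a\|_2$ through $\mu$. It is left explicit, since the plug-in operators are not tied to the curvature condition of Assumption~\ref{h_inv_bounded}. A bound via $\|\widehat{\boldsymbol{B}}_a\|_2\|\widetilde{\boldsymbol{g}}_{\mathrm{proxy}}\|_2$ could be substituted afterwards.
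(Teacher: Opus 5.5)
Your proposal is correct and matches the paper's proof: you add and subtract $m_p\nabla\ell(\boldsymbol{\theta}_0,z_i)^\top\widehat{\boldsymbol{v}}_a$, bound the direction-error term by Cauchy--Schwarz with Assumption~\ref{assusgradb}, and bound the finite-difference term with the same Lipschitz-Hessian Taylor expansion used for Term~C. The paper's proof also derives Eq.~\eqref{eq:plugin-operator-bound} in one line by adding and subtracting $\widehat{\boldsymbol{B}}_a\boldsymbol{g}_{\mathrm{proxy}}$, but the inequality stated in the proposition does not depend on it.
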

\begin{proof}
Adding and subtracting
\(\widehat{\boldsymbol{B}}_a\boldsymbol{g}_{\mathrm{proxy}}\)
gives
\begin{align*}
\boldsymbol{v}_{\lambda}-\widehat{\boldsymbol{v}}_a
&=(\boldsymbol{H}_{0,\lambda}^{-1}
-\widehat{\boldsymbol{B}}_a)\boldsymbol{g}_{\mathrm{proxy}}\\
&\quad+\widehat{\boldsymbol{B}}_a
(\boldsymbol{g}_{\mathrm{proxy}}
-\widetilde{\boldsymbol{g}}_{\mathrm{proxy}}),
\end{align*}
which proves Eq.~\eqref{eq:plugin-operator-bound}. Next, add and subtract
\(-m_p\nabla\ell(\boldsymbol{\theta}_0,z_i)^\top
\widehat{\boldsymbol{v}}_a\) between
\(\mathcal{I}_{\lambda}(z_i)\) and
\(\widehat{\mathcal{I}}_a(z_i)\). Cauchy--Schwarz bounds the resulting direction-error term by
\(m_pG\|\boldsymbol{v}_{\lambda}-\widehat{\boldsymbol{v}}_a\|_2\).
The same Lipschitz-Hessian Taylor expansion used for Term C bounds the remaining forward-difference term by the final two terms in the proposition.
\end{proof}
Equation~\eqref{eq:plugin-operator-bound} separates ZO gradient-estimation error from each variant's structural operator mismatch; this extension is separate from the Algorithm~1 theorem.
\section{Supplementary Experimental Details}
\subsection{Datasets and Evaluation}\label{testdetail}

Our evaluation code framework follows that of~\citet{ivison2025large}. 

MMLU~\citep{hendryckstest2021}: We use the official validation split as proxy data and the official test split for evaluation. Model performance is measured using Average Accuracy. During evaluation, we use a 5-shot setting, consistent with~\citet{ivison2025large}.

BBH (Big-Bench Hard)~\citep{suzgun2022challenging}: This benchmark comprises 27 challenging reasoning tasks across diverse domains. For each task, we randomly sample 1/10 of the data as proxy data for influence computation, with the remaining 9/10 used as test data. Evaluation is based on Average Accuracy. Evaluation follows a 3-shot setting, as in~\citet{ivison2025large}.

SQuAD~\citep{rajpurkar-etal-2016-squad}: A reading comprehension dataset with official training and validation splits. We sample 1,174 examples from the training split as proxy data and use the official validation split for evaluation. Following~\citet{ivison2025large}, model performance is assessed using the F1 score, and no in-context examples are used during evaluation on SQuAD.

All evaluation metrics were computed using the corresponding functions from the Hugging Face evaluate library.

All datasets are converted into a common evaluation format. Their proxy-to-test sizes are approximately 1:9, and the same fixed splits are used for all compared methods to ensure a fair comparison.
The number of proxy samples used for IFs computation and the number of test samples used for final performance evaluation are summarized in Table~\ref{tab:dataset_sizes}.

\begin{table}[htbp]
\centering
\begin{tabular}{lcc}
\toprule
Dataset & $|\mathcal{D}_{\text{proxy}}|$ & $|\mathcal{D}_{\text{test}}|$ \\
\midrule
MMLU   & 1,531 & 14,042 \\
BBH    & 649   & 5,862  \\
SQuAD  & 1,174 & 10,570 \\
\bottomrule
\end{tabular}
\caption{Sample sizes for the proxy and test splits of each dataset.}
\label{tab:dataset_sizes}
\end{table}

\subsection{Implementation and Hyperparameter Settings}\label{Paramset}
Influence functions are computed in bfloat16 for all models. On \texttt{Llama} and \texttt{Qwen}, we use LoRA (rank 16, scaling 64) for IFs computation and switch to a higher-capacity LoRA (rank 64, scaling 128) during fine-tuning. For \texttt{pythia-70m}, we perform full-parameter gradient computation and fine-tuning.

In Step 1, since the proxy dataset sizes for all three datasets are relatively small, we set \( |\mathcal{S}|=m_p\) to ensure estimation accuracy.

When computing IFs, \texttt{ZOIF\_hvp} follows the block-wise empirical-Fisher inverse approximation of~\citet{Kwonetal2023}. To reduce computation, we draw a random subset of 2,000 examples from the data pool for Step~2. The implementation sets each block damping value to one tenth of the mean coordinate-wise ZO gradient second moment over this subset. \texttt{ZOIF\_dia} uses the same ZO gradients to form a coordinate-wise empirical second moment with a $10^{-8}$ stabilizer. For a fair comparison, \texttt{ZOIF\_dia}, \texttt{ZOIF\_opt}, and their matched \texttt{FOIF} counterparts use the same subset.

When computing Step 2 with our \texttt{ZOIF\_opt} method, we employ a ZO optimizer~\citep{malladi2023mezo} to save memory.
Since we approximate the IHVP using $\Delta_{\boldsymbol{\theta}}$, and parameter updates remain within a small range, we use a small batch size of 8 and 50 optimization steps. To prevent gradient explosion, we apply gradient clipping with a threshold of 0.05. For \texttt{pythia-70m}, we set the learning rate to $1\text{e-}6$. For all other models, which optimize only the LoRA layers, the learning rate is set to $1\text{e-}4$.
In contrast, \texttt{FOIF\_opt} uses exact first-order gradients for gradient descent, with all other hyperparameters kept identical to those of \texttt{ZOIF\_opt}.

For memory and runtime comparison, we use the MMLU proxy set in Step 1, a subset of 2,000 examples drawn from the data pool in Step 2, and the full data pool in Step 3. The \texttt{pythia-70m} model is relatively small and sensitive to fluctuations caused by data tokenization. To mitigate this variability in the memory comparison for Step 1 and 2, we construct a synthetic dataset by duplicating a single training example.

For fine-tuning, we use the AdamW optimizer~\citep{loshchilov2019de} with a learning rate of $2\text{e-}4$ and fine-tune the \texttt{Llama} and \texttt{Qwen} series models for three epochs. Given that the \texttt{pythia-70m} model is significantly smaller, we use a smaller learning rate of $1\text{e-}6$ and train it for one epoch.
Due to GPU memory constraints, the per-device batch size is set to 1, combined with gradient accumulation over 8 steps to achieve an effective batch size of 8. FlashAttention-2 is enabled when fine-tuning the \texttt{Llama} and \texttt{Qwen} series models and disabled for \texttt{pythia-70m}; model checkpoint saving is disabled to reduce storage overhead.

\subsection{Details of Baseline Methods}

\begin{itemize}
    \item \texttt{FOIF}: The standard first-order influence function method. For fair comparison, we implement first-order counterparts corresponding exactly to our ZO variants, namely \texttt{FOIF\_opt}, \texttt{FOIF\_hvp}, \texttt{FOIF\_dia}, and \texttt{FOIF\_ide}.
    \item \texttt{LESS} \cite{xia2024less} and \texttt{TRAK} \cite{park2023trak}: Gradient-based influence estimation methods that use gradient projection and random sketching for scalable data attribution. Both require BP to compute the gradient features used by their scores.
    \item \texttt{EMB}: A data selection method based on embedding-space similarity~\citep{ni2022large,lee2024nv}. We use the \texttt{embedding-gemma-300M} model from~\citet{embedding_gemma_2025} to compute embeddings for each sample, then calculate the cosine similarity between each candidate sample and the average embedding of the proxy set. Samples are ranked from highest to lowest similarity, and the top 10000 are selected.
    
    \item \texttt{PPL}: A perplexity-based data selection strategy~\citep{yin2024compute}. We compute the perplexity of each candidate sample using the target LLM and sort the samples in descending order of perplexity. Following~\citet{yin2024compute}, we consider two sampling schemes: \texttt{Top\_PPL}, which selects the top 10,000 samples with the highest perplexity, and \texttt{Mid\_PPL}, which selects 10,000 samples from the middle of the perplexity distribution.
    
    \item \texttt{Random}: Randomly samples 10000 examples from the candidate pool for fine-tuning.
    
    \item \texttt{Orig}: The performance of the original, un-fine-tuned model, serving as a lower-bound reference.
\end{itemize}

\subsection{Hardware and Reproducibility}
All reported experimental results are averaged over three independent runs.

Primary model fine-tuning and downstream evaluations were conducted on NVIDIA A100 GPUs with 80GB memory, using two Hygon C86 7390 32-core processors, Ubuntu 22.04, Python 3.12, PyTorch 2.7.0, and CUDA 11.8. For rigorous efficiency cross-validation, we additionally used a domestic heterogeneous AI accelerator with 64GB device memory. This environment used one Hygon C86 7185 32-core processor (6 CPU cores allocated), 115GB system memory, Ubuntu 22.04, Python 3.11, PyTorch 2.9.0, and DTK 26.04.
While absolute memory and runtime metrics varied slightly across hardware platforms ($\pm 0.02$ GB memory, $\sim 1.5\times$ speedup on A100), the relative efficiency advantages of \texttt{ZOIF} over \texttt{FOIF} remained consistent, confirming hardware-agnostic scalability.

\section{Supplementary Experimental Results}\label{aresults}
\setlength{\tabcolsep}{2pt}

\subsection{Comparison of Memory and Time Efficiency}
\subsubsection{Memory Efficiency}\label{memory2}
The peak GPU memory results for all three steps are shown in Table~\ref{tab:llm_memory_all_steps}, demonstrating similar findings.
In particular, \texttt{ZOIF\_opt} exhibits a clear advantage over \texttt{FOIF\_opt} in terms of GPU memory usage.
Gradient-based methods such as \texttt{LESS} and \texttt{TRAK} require backpropagation and storage of per-sample gradient features. On larger models, this memory overhead exceeded our hardware budget; consequently, we could not run these methods on \texttt{Llama-2-7B} or \texttt{Qwen3-1.7B}. This limitation further motivates forward-only influence estimation for memory-constrained settings.

\begin{table*}[t]
\centering
{\small
\setlength{\tabcolsep}{2pt}
\begin{tabular}{@{}l *{10}{c} l@{}}
\toprule
\multirow{4}{*}{LLM} & \multicolumn{5}{c}{\texttt{FOIF}} & \multicolumn{5}{c}{\texttt{ZOIF}} & \multirow{4}{*}{Notes} \\
\cmidrule(lr){2-6} \cmidrule(lr){7-11}
& \multicolumn{1}{c}{Step 1} & \multicolumn{3}{c}{Step 2} & Step 3 & \multicolumn{1}{c}{Step 1} & \multicolumn{3}{c}{Step 2} & Step 3 & \\
\cmidrule(lr){2-2} \cmidrule(lr){3-5} \cmidrule(lr){6-6} \cmidrule(lr){7-7} \cmidrule(lr){8-10} \cmidrule(lr){11-11}
& & \texttt{FOIF\_dia} & \texttt{FOIF\_hvp} & \texttt{FOIF\_opt} & & & \texttt{ZOIF\_dia} & \texttt{ZOIF\_hvp} & \texttt{ZOIF\_opt} & & \\
\midrule
\texttt{pythia-70m}     & 0.68 & 0.94 & 0.86 & 2.70 & 0.92 & 0.63 & 0.90 & 0.63 & 1.72 & 0.51 & FULL \\
\texttt{Llama-3.2-1B}      & 3.91 & 3.94 & 3.96 & 10.67 & 3.91 & 2.96 & 2.97 & 2.97 & 3.00 & 2.95 & LORA \\
\texttt{Llama-2-7B}       & 15.74 & 15.56 & 15.62 & 34.39 & 15.46 & 13.08 & 13.15 & 13.15 & 13.27 & 13.02 & LORA \\
\texttt{Qwen3-0.6B}       & 3.09 & 3.11 & 3.13 & 15.77 & 3.10 & 1.92 & 1.94 & 1.94 & 1.98 & 1.91 & LORA \\
\texttt{Qwen3-1.7B}       & 5.68 & 5.73 & 5.74 & 21.46 & 5.69 & 4.03 & 4.06 & 4.06 & 4.11 & 4.01 & LORA \\
\bottomrule
\end{tabular}
}
\caption{Peak GPU memory usage (in GB) for \texttt{FOIF} and \texttt{ZOIF} across all steps.}
\label{tab:llm_memory_all_steps}
\end{table*}

\subsubsection{Time Efficiency}\label{time2}

Table~\ref{tab:llm_time_all_steps} presents a step-by-step breakdown of the runtime. Although the zeroth-order (ZO) perturbation leads to increased computational overhead in the earlier steps for \texttt{ZOIF}, its overall runtime remains on par with that of \texttt{FOIF}.

\begin{table*}[t]
\centering
{\small
\setlength{\tabcolsep}{2pt}
\begin{tabular}{@{}l *{10}{c} l@{}}
\toprule
\multirow{4}{*}{LLM} & \multicolumn{5}{c}{\texttt{FOIF}} & \multicolumn{5}{c}{\texttt{ZOIF}} & \multirow{4}{*}{Notes} \\
\cmidrule(lr){2-6} \cmidrule(lr){7-11}
& \multicolumn{1}{c}{Step 1} & \multicolumn{3}{c}{Step 2} & Step 3 & \multicolumn{1}{c}{Step 1} & \multicolumn{3}{c}{Step 2} & Step 3 & \\
\cmidrule(lr){2-2} \cmidrule(lr){3-5} \cmidrule(lr){6-6} \cmidrule(lr){7-7} \cmidrule(lr){8-10} \cmidrule(lr){11-11}
& & \texttt{FOIF\_dia} & \texttt{FOIF\_hvp} & \texttt{FOIF\_opt} & & & \texttt{ZOIF\_dia} & \texttt{ZOIF\_hvp} & \texttt{ZOIF\_opt} & &\\
\midrule
\texttt{pythia-70m}     & 0.017 & 0.021 & 0.040 & 0.033 & 37.60 & 0.27 & 0.33 & 0.58 & 0.60 & 22.50 & FULL \\
\texttt{Llama-3.2-1B}      & 0.058 & 0.083 & 0.175 & 0.050 & 34.00 & 0.80 & 0.92 & 2.28 & 0.80 & 30.00 & LORA \\
\texttt{Llama-2-7B}       & 0.100 & 0.150 & 0.320 & 0.050 & 43.00 & 2.00 & 2.50 & 5.00 & 2.00 & 40.00 & LORA \\
\texttt{Qwen3-0.6B}       & 0.100 & 0.150 & 0.310 & 0.050 & 40.00 & 1.50 & 1.70 & 3.50 & 1.00 & 31.75 & LORA \\
\texttt{Qwen3-1.7B}       & 0.100 & 0.150 & 0.320 & 0.050 & 40.30 & 1.50 & 1.80 & 3.50 & 1.60 & 32.00 & LORA \\
\bottomrule
\end{tabular}
}
\caption{Computation time (in hours) for \texttt{FOIF} and \texttt{ZOIF} across all steps.}
\label{tab:llm_time_all_steps}
\end{table*}

\subsubsection{Comparison with Memory Optimization Techniques}
\label{app:memory-optimization-comparison}
First-order methods such as \texttt{FOIF} can reduce activation memory using Gradient Checkpointing (GC) and Activation Offloading. We evaluate these techniques on \texttt{Llama-3.2-1B} ($K=30$). As shown in Table~\ref{tab:memory_optimization}, applying both techniques reduces the peak memory of \texttt{FOIF} from 3.91 GB to 3.18 GB, which remains \textbf{7.4\% higher} than the 2.96 GB required by \texttt{ZOIF}. The optimizations also increase the runtime of \texttt{FOIF} relative to its unoptimized setting: GC, offloading, and their combination require 0.125h, 0.750h, and 0.167h, respectively, compared with 0.058h for the unoptimized method.

\begin{table}[t]
\centering
{\small
\setlength{\tabcolsep}{4pt}
\begin{tabular}{@{}lcc@{}}
\toprule
\textbf{Method} & \textbf{Memory (GB)} & \textbf{Time (h)} \\
\midrule
\texttt{FOIF} & 3.91 & 0.058 \\
\textbf{\texttt{ZOIF}} & \textbf{2.96} & 0.800 \\
\texttt{FOIF+GC} & 3.21 & 0.125 \\
\texttt{FOIF+Offload} & 3.19 & 0.750 \\
\texttt{FOIF+Both} & 3.18 & 0.167 \\
\bottomrule
\end{tabular}
}
\caption{Step 1 memory and runtime on \texttt{Llama-3.2-1B} ($K=30$).}
\label{tab:memory_optimization}
\end{table}

These Step~1 results show that \texttt{ZOIF} retains a lower measured memory footprint than the evaluated first-order configurations, including those using gradient checkpointing and activation offloading.

\subsection{Comparison of Fine-tuning Performance Across Different Data Selection Methods}\label{strategy2} 

\subsubsection{Results on \texttt{pythia-70m}}\label{py2}

Results for \texttt{pythia-70m} fine-tuned using data selected under Strategy~1 are shown in Table~\ref{tab:pythia-70m_results}.
Limited by the inherent capacity of \texttt{pythia-70m}, fine-tuning yields only marginal improvements in model performance.
Nevertheless, we observe that \texttt{ZOIF\_hvp} achieves the highest average score and slightly outperforms the other methods. Moreover, the \texttt{ZOIF} variants as a whole demonstrate performance comparable to that of the \texttt{FOIF} approaches. The Strategy~2 results are shown in Table~\ref{tab:pythia-70m_alt} and exhibit a similar pattern.

\begin{table}[htbp]
\centering
\begin{tabular}{lcccc}
\toprule
\textbf{Method} & \textbf{\shortstack{MMLU\\Acc.}} & \textbf{\shortstack{BBH\\Acc.}} & \textbf{\shortstack{SQuAD\\F1}} & \textbf{\shortstack{Macro\\Avg.}} \\
\midrule
\texttt{ZOIF\_dia} & 23.45 & 7.24 & 0.64 & 10.44 $\pm$ 0.01 \\
\texttt{ZOIF\_hvp} & 23.17 & \best{9.25} & 0.65 & \best{11.03} $\pm$ 0.00 \\
\texttt{ZOIF\_ide} & 23.38 & 7.38 & 0.66 & 10.47 $\pm$ 0.01 \\
\texttt{ZOIF\_opt} & 23.30 & 6.31 & 0.67 & 10.09 $\pm$ 0.01 \\
\texttt{FOIF\_dia} & 23.29 & 7.03 & 0.59 & 10.30 $\pm$ 0.01 \\
\texttt{FOIF\_hvp} & \best{23.60} & 7.09 & 0.54 & 10.41 $\pm$ 0.02 \\
\texttt{FOIF\_ide} & 23.26 & 7.39 & 0.67 & 10.44 $\pm$ 0.00 \\
\texttt{FOIF\_opt} & 23.36 & 7.63 & \best{0.68} & 10.55 $\pm$ 0.01 \\
\texttt{EMB} & 23.09 & 6.08 & 0.52 & 9.90 $\pm$ 0.01 \\
\texttt{Top\_PPL} & 23.16 & 6.57 & 0.65 & 10.13 $\pm$ 0.01 \\
\texttt{Mid\_PPL} & 23.18 & 6.86 & 0.65 & 10.23 $\pm$ 0.01 \\
\texttt{Random} & 23.17 & 7.58 & 0.61 & 10.46 $\pm$ 0.00 \\
\texttt{Orig} & 22.95 & 1.44 & 0.58 & 8.32 \\
\bottomrule

\end{tabular}
\caption{Held-out test performance on \texttt{pythia-70m} (Strategy 1). The macro-average entry is reported as mean $\pm$ standard deviation over three independent runs.}
\label{tab:pythia-70m_results}
\end{table}

\begin{table}[htbp]
\centering
\begin{tabular}{lcccc}
\toprule
\textbf{Method} & \textbf{\shortstack{MMLU\\Acc.}} & \textbf{\shortstack{BBH\\Acc.}} & \textbf{\shortstack{SQuAD\\F1}} & \textbf{\shortstack{Macro\\Avg.}} \\
\midrule
\texttt{ZOIF\_dia} & 23.29 & 7.09 & \best{0.70} & 10.36 $\pm$ 0.02 \\
\texttt{ZOIF\_hvp} & 23.29 & 7.45 & 0.64 & 10.46 $\pm$ 0.01 \\
\texttt{ZOIF\_ide} & 23.35 & 8.59 & 0.55 & 10.83 $\pm$ 0.01 \\
\texttt{ZOIF\_opt} & 23.55 & 6.51 & 0.57 & 10.21 $\pm$ 0.02 \\
\texttt{FOIF\_dia} & 23.48 & 6.28 & 0.61 & 10.12 $\pm$ 0.01 \\
\texttt{FOIF\_hvp} & 23.19 & 7.14 & 0.62 & 10.32 $\pm$ 0.03 \\
\texttt{FOIF\_ide} & 23.33 & 7.57 & 0.63 & 10.51 $\pm$ 0.01 \\
\texttt{FOIF\_opt} & \best{23.63} & \best{9.51} & 0.56 & \best{11.23} $\pm$ 0.00 \\
\texttt{EMB} & 23.36 & 6.73 & 0.63 & 10.24 $\pm$ 0.01 \\
\texttt{Top\_PPL} & 23.17 & 6.57 & 0.65 & 10.13 $\pm$ 0.01 \\
\texttt{Mid\_PPL} & 23.18 & 6.87 & 0.65 & 10.23 $\pm$ 0.01 \\
\texttt{Random} & 23.17 & 7.58 & 0.61 & 10.46 $\pm$ 0.00 \\
\texttt{Orig} & 22.95 & 1.44 & 0.58 & 8.32 \\
\bottomrule
\end{tabular}
\caption{Held-out test performance on \texttt{pythia-70m} (Strategy 2). The macro-average entry is reported as mean $\pm$ standard deviation over three independent runs.}
\label{tab:pythia-70m_alt}
\end{table}

\subsubsection{Results on \texttt{Llama}}\label{lla2}

\paragraph{\texttt{Llama-3.2-1B} (Strategy~1).}
Evaluation results for \texttt{Llama-3.2-1B} under Strategy~1 are shown in Table~\ref{tab:Llama-3.2-1B_results}.  
\texttt{TRAK} achieves the highest average score (45.73), followed by \texttt{LESS} (44.53), with \texttt{ZOIF\_hvp} attaining 43.98---within 4\% of the best result. Crucially, both \texttt{TRAK} and \texttt{LESS} require backpropagation through the full computational graph, incurring activation memory that scales with model depth and sequence length. By contrast, \texttt{ZOIF} operates entirely via forward passes, achieving competitive downstream performance at a fraction of the memory cost. This illustrates the central advantage of \texttt{ZOIF}: it offers a practical, memory-efficient alternative for influence-based data selection, delivering performance close to gradient-based methods without the prohibitive memory demands. Furthermore, \texttt{ZOIF} consistently outperforms non-gradient baselines (\texttt{EMB}, \texttt{PPL}, \texttt{Random}) by a clear margin.

\begin{table}[htbp]
\centering
{\small
\setlength{\tabcolsep}{3pt}
\begin{tabular}{@{}lcccc@{}}
\toprule
\textbf{Method} & \textbf{\shortstack{MMLU\\Acc.}} & \textbf{\shortstack{BBH\\Acc.}} & \textbf{\shortstack{SQuAD\\F1}} & \textbf{\shortstack{Macro\\Avg.}} \\
\midrule
  \texttt{ZOIF\_dia} & 34.36 & 30.81 & 63.21 & 42.79 $\pm$ 0.92 \\
  \texttt{ZOIF\_hvp} & 34.23 & 29.63 & 68.09 & 43.98 $\pm$ 0.65 \\
  \texttt{ZOIF\_ide} & 33.85 & 30.57 & 64.27 & 42.90 $\pm$ 0.71 \\
  \texttt{ZOIF\_opt} & 31.37 & 30.77 & 65.80 & 42.64 $\pm$ 1.59 \\
  \texttt{FOIF\_dia} & 30.59 & 29.92 & 63.31 & 41.27 $\pm$ 1.23 \\
  \texttt{FOIF\_hvp} & 31.11 & \best{30.93} & 59.50 & 40.51 $\pm$ 1.38 \\
  \texttt{FOIF\_ide} & 32.43 & 30.55 & 60.85 & 41.28 $\pm$ 1.59 \\
  \texttt{FOIF\_opt} & 30.77 & 30.20 & 60.03 & 40.34 $\pm$ 1.32 \\
  \texttt{LESS} & 32.98 & 28.99 & 71.62 & 44.53 $\pm$ 0.92 \\
  \texttt{TRAK} & \best{35.65} & 29.38 & \best{72.17} & \best{45.73} $\pm$ 1.07 \\
  \texttt{EMB} & 34.49 & 30.70 & 59.21 & 41.47 $\pm$ 0.31 \\
  \texttt{Top\_PPL} & 31.98 & 28.05 & 57.39 & 39.14 $\pm$ 0.73 \\
  \texttt{Mid\_PPL} & 32.47 & 28.27 & 57.78 & 39.51 $\pm$ 0.60 \\
  \texttt{Random} & 32.79 & 30.58 & 62.31 & 41.89 $\pm$ 0.52 \\
  \texttt{Orig} & 30.69 & 28.92 & 47.42 & 35.67 \\
\bottomrule
\end{tabular}
}
\caption{Held-out test performance on \texttt{Llama-3.2-1B} (Strategy 1). The average is the unweighted macro-average of MMLU accuracy, BBH accuracy, and SQuAD F1. The macro-average entry is reported as mean $\pm$ standard deviation over three independent runs.}
\label{tab:Llama-3.2-1B_results}
\end{table}

Results for \texttt{Llama-3.2-1B} fine-tuned using data selected under Strategy~2 are shown in Table~\ref{tab:Llama-3.2-1B_alt}. Under this strategy, \texttt{TRAK} again achieves the highest average score, while \texttt{ZOIF} variants deliver competitive performance consistent with the main-text observations, reinforcing the memory--performance trade-off. For \texttt{Llama-2-7B}, results under Strategy~1 and Strategy~2 are reported in Tables~\ref{tab:llama-2-7B_results} and~\ref{tab:llama-2-7B_alt}, respectively, showing that \texttt{ZOIF} remains on par with \texttt{FOIF}.

\begin{table}[htbp]
\centering
\begin{tabular}{lcccc}
\toprule
\textbf{Method} & \textbf{\shortstack{MMLU\\Acc.}} & \textbf{\shortstack{BBH\\Acc.}} & \textbf{\shortstack{SQuAD\\F1}} & \textbf{\shortstack{Macro\\Avg.}} \\
\midrule
  \texttt{ZOIF\_dia} & 33.05 & 30.31 & 66.40 & 43.25 $\pm$ 1.04 \\
  \texttt{ZOIF\_hvp} & 33.17 & 31.15 & 58.45 & 40.92 $\pm$ 2.08 \\
  \texttt{ZOIF\_ide} & 33.05 & 30.34 & 63.17 & 42.19 $\pm$ 1.13 \\
  \texttt{ZOIF\_opt} & 31.33 & \best{31.26} & 65.08 & 42.55 $\pm$ 0.89 \\
  \texttt{FOIF\_dia} & 33.69 & 30.86 & 65.61 & 43.39 $\pm$ 1.73 \\
  \texttt{FOIF\_hvp} & 32.14 & 30.44 & 64.11 & 42.23 $\pm$ 1.19 \\
  \texttt{FOIF\_ide} & 31.51 & 30.86 & 68.49 & 43.62 $\pm$ 1.48 \\
  \texttt{FOIF\_opt} & 33.99 & 30.85 & 56.24 & 40.36 $\pm$ 0.66 \\
  \texttt{LESS} & 33.04 & 30.87 & 64.35 & 42.75 $\pm$ 1.21 \\
  \texttt{TRAK} & \best{36.26} & 30.69 & \best{73.92} & \best{46.96} $\pm$ 1.00 \\
  \texttt{EMB} & 34.70 & 30.58 & 69.44 & 44.91 $\pm$ 0.63 \\
  \texttt{Top\_PPL} & 32.96 & 28.73 & 59.18 & 40.29 $\pm$ 1.01 \\
  \texttt{Mid\_PPL} & 31.82 & 28.58 & 57.48 & 39.29 $\pm$ 0.66 \\
  \texttt{Random} & 32.23 & 30.71 & 62.11 & 41.68 $\pm$ 0.52 \\
  \texttt{Orig} & 30.69 & 28.92 & 47.42 & 35.67 \\
\bottomrule
\end{tabular}
\caption{Held-out test performance on \texttt{Llama-3.2-1B} (Strategy 2). The macro-average entry is reported as mean $\pm$ standard deviation over three independent runs.}
\label{tab:Llama-3.2-1B_alt}
\end{table}

\begin{table}[htbp]
\centering
\begin{tabular}{lcccc}
\toprule
\textbf{Method} & \textbf{\shortstack{MMLU\\Acc.}} & \textbf{\shortstack{BBH\\Acc.}} & \textbf{\shortstack{SQuAD\\F1}} & \textbf{\shortstack{Macro\\Avg.}} \\
\midrule
  \texttt{ZOIF\_dia} & 25.29 & 35.15 & 79.70 & 46.71 $\pm$ 0.50 \\
  \texttt{ZOIF\_hvp} & 25.45 & 36.16 & 82.86 & 48.16 $\pm$ 0.77 \\
  \texttt{ZOIF\_ide} & \best{26.22} & 35.57 & \best{83.58} & \best{48.46} $\pm$ 1.00 \\
  \texttt{ZOIF\_opt} & 25.93 & 35.24 & 83.33 & 48.17 $\pm$ 0.61 \\
  \texttt{FOIF\_dia} & 25.53 & 36.31 & 82.84 & 48.23 $\pm$ 0.35 \\
  \texttt{FOIF\_hvp} & 25.36 & 36.84 & 82.50 & 48.24 $\pm$ 0.98 \\
  \texttt{FOIF\_ide} & 25.85 & 35.27 & 81.84 & 47.65 $\pm$ 0.96 \\
  \texttt{FOIF\_opt} & 25.11 & 35.37 & 77.58 & 46.02 $\pm$ 0.79 \\
  \texttt{EMB} & 25.66 & \best{37.14} & 80.91 & 47.90 $\pm$ 0.49 \\
  \texttt{Top\_PPL} & 25.30 & 32.68 & 76.15 & 44.71 $\pm$ 0.32 \\
  \texttt{Mid\_PPL} & 25.42 & 33.70 & 72.95 & 44.02 $\pm$ 0.18 \\
  \texttt{Random} & 25.40 & 36.63 & 78.86 & 46.96 $\pm$ 0.36 \\
  \texttt{Orig} & 25.24 & 32.26 & 42.87 & 33.46 \\
\bottomrule
\end{tabular}
\caption{Held-out test performance on \texttt{Llama-2-7B} (Strategy 1). The macro-average entry is reported as mean $\pm$ standard deviation over three independent runs.}
\label{tab:llama-2-7B_results}
\end{table}

\begin{table}[htbp]
\centering
\begin{tabular}{lcccc}
\toprule
\textbf{Method} & \textbf{\shortstack{MMLU\\Acc.}} & \textbf{\shortstack{BBH\\Acc.}} & \textbf{\shortstack{SQuAD\\F1}} & \textbf{\shortstack{Macro\\Avg.}} \\
\midrule
  \texttt{ZOIF\_dia} & \best{25.77} & 35.85 & 80.32 & 47.31 $\pm$ 0.32 \\
  \texttt{ZOIF\_hvp} & 25.51 & 34.89 & 83.18 & 47.86 $\pm$ 0.36 \\
  \texttt{ZOIF\_ide} & 25.06 & 35.02 & 69.44 & 43.17 $\pm$ 0.81 \\
  \texttt{ZOIF\_opt} & 25.60 & 35.16 & 82.49 & 47.75 $\pm$ 1.32 \\
  \texttt{FOIF\_dia} & 25.73 & 33.81 & 83.81 & 47.78 $\pm$ 0.46 \\
  \texttt{FOIF\_hvp} & 25.39 & 34.37 & 82.49 & 47.42 $\pm$ 0.11 \\
  \texttt{FOIF\_ide} & 25.16 & 34.98 & \best{84.54} & 48.22 $\pm$ 0.96 \\
  \texttt{FOIF\_opt} & 25.42 & 34.39 & 75.78 & 45.20 $\pm$ 0.89 \\
  \texttt{EMB} & 25.26 & 36.06 & 84.09 & \best{48.47} $\pm$ 0.20 \\
  \texttt{Top\_PPL} & 25.23 & 32.91 & 75.88 & 44.67 $\pm$ 0.32 \\
  \texttt{Mid\_PPL} & 25.39 & 33.74 & 72.71 & 43.95 $\pm$ 0.18 \\
  \texttt{Random} & 25.41 & \best{36.39} & 79.00 & 46.93 $\pm$ 0.36 \\
  \texttt{Orig} & 25.24 & 32.26 & 42.87 & 33.46 \\
\bottomrule
\end{tabular}
\caption{Held-out test performance on \texttt{Llama-2-7B} (Strategy 2). The macro-average entry is reported as mean $\pm$ standard deviation over three independent runs.}
\label{tab:llama-2-7B_alt}
\end{table}

\subsubsection{Results on \texttt{Qwen}}\label{qw2}

Fine-tuning and evaluation results for \texttt{Qwen3-0.6B} (Strategy~2) and \texttt{Qwen3-1.7B} (Strategies~1 and~2) are reported in Tables~\ref{tab:Qwen3-0.6B_alt}, \ref{tab:Qwen3-1.7B_results}, and~\ref{tab:Qwen3-1.7B_alt}.
For \texttt{Qwen3-0.6B} (Strategy~2), \texttt{ZOIF\_opt} achieves the highest average score (51.72), outperforming \texttt{LESS} (50.16) and yielding a marginally higher average than \texttt{TRAK} (51.67). On \texttt{Qwen3-1.7B}, \texttt{ZOIF} consistently matches \texttt{FOIF} in overall performance, demonstrating robust results across model scales and strategies.

\begin{table}[htbp]
\centering
\begin{tabular}{lcccc}
\toprule
\textbf{Method} & \textbf{\shortstack{MMLU\\Acc.}} & \textbf{\shortstack{BBH\\Acc.}} & \textbf{\shortstack{SQuAD\\F1}} & \textbf{\shortstack{Macro\\Avg.}} \\
\midrule
  \texttt{ZOIF\_dia} & 43.32 & 35.06 & 75.55 & 51.31 $\pm$ 0.45 \\
  \texttt{ZOIF\_hvp} & 44.33 & 33.13 & 76.05 & 51.17 $\pm$ 0.81 \\
  \texttt{ZOIF\_ide} & 44.21 & 34.64 & 72.78 & 50.54 $\pm$ 0.94 \\
  \texttt{ZOIF\_opt} & 45.12 & 34.53 & 75.52 & \best{51.72} $\pm$ 0.65 \\
  \texttt{FOIF\_dia} & 43.04 & 34.26 & 73.91 & 50.41 $\pm$ 0.60 \\
  \texttt{FOIF\_hvp} & 44.35 & 33.96 & 70.88 & 49.73 $\pm$ 1.15 \\
  \texttt{FOIF\_ide} & 44.47 & 34.85 & 73.23 & 50.85 $\pm$ 0.40 \\
  \texttt{FOIF\_opt} & 42.89 & 33.70 & 75.53 & 50.71 $\pm$ 0.93 \\
  \texttt{LESS} & 43.15 & 33.22 & 74.11 & 50.16 $\pm$ 0.31 \\
  \texttt{TRAK} & 44.98 & 33.42 & \best{76.60} & 51.67 $\pm$ 0.49 \\
  \texttt{EMB} & 45.65 & \best{35.21} & 73.59 & 51.48 $\pm$ 0.16 \\
  \texttt{Top\_PPL} & 45.75 & 34.00 & 71.52 & 50.43 $\pm$ 0.27 \\
  \texttt{Mid\_PPL} & 45.82 & 33.83 & 72.31 & 50.66 $\pm$ 0.24 \\
  \texttt{Random} & 45.27 & 34.33 & 69.51 & 49.70 $\pm$ 0.37 \\
  \texttt{Orig} & \best{47.09} & 30.54 & 48.45 & 42.03 \\
\bottomrule
\end{tabular}
\caption{Held-out test performance on \texttt{Qwen3-0.6B} (Strategy 2). The macro-average entry is reported as mean $\pm$ standard deviation over three independent runs.}
\label{tab:Qwen3-0.6B_alt}
\end{table}

\begin{table}[htbp]
\centering
\begin{tabular}{lcccc}
\toprule
\textbf{Method} & \textbf{\shortstack{MMLU\\Acc.}} & \textbf{\shortstack{BBH\\Acc.}} & \textbf{\shortstack{SQuAD\\F1}} & \textbf{\shortstack{Macro\\Avg.}} \\
\midrule
  \texttt{ZOIF\_dia} & 57.55 & 41.46 & 82.49 & 60.50 $\pm$ 0.37 \\
  \texttt{ZOIF\_hvp} & 57.38 & 41.85 & 80.16 & 59.80 $\pm$ 0.28 \\
  \texttt{ZOIF\_ide} & 57.68 & \best{42.75} & 81.08 & 60.50 $\pm$ 0.61 \\
  \texttt{ZOIF\_opt} & 57.36 & 41.78 & 83.20 & 60.78 $\pm$ 0.16 \\
  \texttt{FOIF\_dia} & 58.25 & 42.37 & \best{83.99} & \best{61.53} $\pm$ 0.59 \\
  \texttt{FOIF\_hvp} & 58.01 & 42.25 & 82.58 & 60.95 $\pm$ 1.06 \\
  \texttt{FOIF\_ide} & 58.54 & 41.66 & 81.43 & 60.54 $\pm$ 1.04 \\
  \texttt{FOIF\_opt} & 57.81 & 40.33 & 82.30 & 60.15 $\pm$ 0.65 \\
  \texttt{EMB} & 58.34 & 42.37 & 83.04 & 61.25 $\pm$ 0.33 \\
  \texttt{Top\_PPL} & 57.59 & 41.94 & 82.70 & 60.74 $\pm$ 0.65 \\
  \texttt{Mid\_PPL} & 58.59 & 41.11 & 81.57 & 60.42 $\pm$ 0.56 \\
  \texttt{Random} & 58.85 & 42.42 & 81.93 & 61.07 $\pm$ 0.44 \\
  \texttt{Orig} & \best{59.88} & 40.50 & 40.66 & 47.01 \\
\bottomrule

\end{tabular}
\caption{Held-out test performance on \texttt{Qwen3-1.7B} (Strategy 1). The macro-average entry is reported as mean $\pm$ standard deviation over three independent runs.}
\label{tab:Qwen3-1.7B_results}
\end{table}

\begin{table}[htbp]
\centering
\begin{tabular}{lcccc}
\toprule
\textbf{Method} & \textbf{\shortstack{MMLU\\Acc.}} & \textbf{\shortstack{BBH\\Acc.}} & \textbf{\shortstack{SQuAD\\F1}} & \textbf{\shortstack{Macro\\Avg.}} \\
\midrule
  \texttt{ZOIF\_dia} & 57.81 & 41.51 & 82.25 & 60.52 $\pm$ 0.40 \\
  \texttt{ZOIF\_hvp} & 57.60 & 41.08 & 83.60 & 60.76 $\pm$ 0.45 \\
  \texttt{ZOIF\_ide} & 57.15 & 41.04 & 81.81 & 60.00 $\pm$ 1.17 \\
  \texttt{ZOIF\_opt} & 57.84 & 41.34 & 82.52 & 60.57 $\pm$ 0.47 \\
  \texttt{FOIF\_dia} & 58.52 & 42.03 & \best{83.88} & \best{61.47} $\pm$ 0.80 \\
  \texttt{FOIF\_hvp} & 56.81 & 40.36 & 83.39 & 60.19 $\pm$ 0.61 \\
  \texttt{FOIF\_ide} & 57.86 & \best{42.11} & 78.82 & 59.60 $\pm$ 2.10 \\
  \texttt{FOIF\_opt} & 57.43 & 40.86 & 81.65 & 59.98 $\pm$ 0.43 \\
  \texttt{EMB} & 58.55 & 40.77 & 83.25 & 60.86 $\pm$ 0.14 \\
  \texttt{Top\_PPL} & 57.59 & 41.94 & 82.70 & 60.74 $\pm$ 0.65 \\
  \texttt{Mid\_PPL} & 58.59 & 41.11 & 81.57 & 60.42 $\pm$ 0.56 \\
  \texttt{Random} & 58.72 & 41.72 & 81.22 & 60.56 $\pm$ 0.44 \\
  \texttt{Orig} & \best{59.88} & 40.50 & 40.66 & 47.01 \\
\bottomrule

\end{tabular}
\caption{Held-out test performance on \texttt{Qwen3-1.7B} (Strategy 2). The macro-average entry is reported as mean $\pm$ standard deviation over three independent runs.}
\label{tab:Qwen3-1.7B_alt}
\end{table}

\subsection{Hyperparameter Sensitivity Analysis}\label{sensitiva}
\subsubsection{Sensitivity of \texorpdfstring{$K$ and $\alpha$}{K and alpha}}
The results are presented in Table~\ref{tab:hyper_sensitivity}.
In this sensitivity setting, the best observed performance occurs at $\alpha = 0.01$ for all four variants. A larger $\alpha$ can amplify truncation error due to the first-order approximation, whereas an excessively small $\alpha$ can make gradient estimation more vulnerable to floating-point noise and numerical instability. This result is specific to the evaluated \texttt{Llama-3.2-1B} SQuAD setting and does not imply that $\alpha = 0.01$ is universally optimal.

Regarding the number of perturbations $K$, the best observed performance generally occurs within the tested range of $10$ to $30$, although the trend varies across \texttt{ZOIF} variants. A larger direction budget reduces Monte Carlo error for the ZO estimator, while finite-radius bias and the structural Step~2 residual remain. The non-monotonic results therefore indicate diminishing and variant-dependent returns over the tested range. Theorem~\ref{theoremboundLowrH} further gives an effective-dimension interpretation when both the computation and perturbations are explicitly restricted to a low-rank subspace.

\begin{table}[tb]
\centering
\begin{tabular}{lccc}
\toprule
\multicolumn{4}{c}{$\alpha$ Sensitivity (fixed $K=30$)} \\
\textbf{\texttt{ZOIF} Variant} & $\alpha=0.001$ & $\alpha=0.01$ & $\alpha=0.1$ \\
\midrule
\texttt{ZOIF\_dia} & 62.40 & \textbf{69.36} & 65.19 \\
\texttt{ZOIF\_hvp} & 59.28 & \textbf{69.43} & 64.08 \\
\texttt{ZOIF\_ide} & 61.95 & \textbf{67.06} & 55.95 \\
\texttt{ZOIF\_opt} & 65.22 & \textbf{70.58} & 61.02 \\
\midrule
\multicolumn{4}{c}{$K$ Sensitivity (fixed $\alpha=0.001$)} \\
\textbf{\texttt{ZOIF} Variant} & $K=5$ & $K=10$ & $K=30$ \\
\midrule
\texttt{ZOIF\_dia} & 60.22 & 61.63 & \textbf{62.40} \\
\texttt{ZOIF\_hvp} & 44.41 & \textbf{65.98} & 59.28 \\
\texttt{ZOIF\_ide} & 52.17 & 56.57 & \textbf{61.95} \\
\texttt{ZOIF\_opt} & 61.19 & \textbf{67.31} & 65.22 \\
\bottomrule
\end{tabular}
\caption{Performance sensitivity to perturbation radius $\alpha$ (fixed $K=30$) and random directions $K$ (fixed $\alpha=0.001$) on Llama-3.2-1B (SQuAD).}
\label{tab:hyper_sensitivity}
\end{table}

\subsubsection{Selection Budget Sensitivity}

We further investigate the impact of data selection budget on model performance by conducting a sensitivity analysis on \texttt{Llama-3.2-1B} (SQuAD, Strategy~2) across selection ratios of $2.5\%$, $5.0\%$, $10.0\%$, and $15.0\%$. As summarized in Table~\ref{tab:budget-sensitivity}, the results reveal a non-monotonic trend: performance generally improves as the budget increases from $2.5\%$ to $10.0\%$, but declines at the $15.0\%$ ratio ($\sim$30k samples) for most \texttt{ZOIF} variants and the Random baseline. This observation suggests that data selection is not governed by a simple ``more-is-better'' principle; beyond an optimal threshold, expanding the selection budget may introduce redundancy or distributional shift, thereby diluting the quality of the curated subset and impairing fine-tuning efficacy.

\begin{table}[tb]
\centering
\begin{tabular}{lcccc}
\toprule
\textbf{Method} & \textbf{2.5\%} & \textbf{5.0\%} & \textbf{10.0\%} & \textbf{15.0\%} \\
\midrule
\texttt{ZOIF\_dia}       & 60.66          & 62.40          & \textbf{73.18}  & \textbf{69.69}  \\
\texttt{ZOIF\_hvp}       & 61.06          & 59.28          & 58.61           & 60.66           \\
\texttt{ZOIF\_ide}       & \textbf{64.80} & 61.95          & 64.96           & 62.91           \\
\texttt{ZOIF\_opt}       & 62.50          & \textbf{65.22} & 69.08           & 65.39           \\
\midrule
\texttt{Random}          & 58.80          & 62.41          & 65.23           & 64.33           \\
\bottomrule
\end{tabular}
\caption{Sensitivity analysis of selection budget on \texttt{Llama-3.2-1B} (SQuAD, Strategy~2).}
\label{tab:budget-sensitivity}
\end{table}

Across the majority of selection ratios, \texttt{ZOIF\_dia} and \texttt{ZOIF\_opt} consistently outperform random sampling, with peak gains observed at moderate budgets ($5.0\%$--$10.0\%$). Notably, \texttt{ZOIF\_ide} achieves the highest score at the most constrained setting ($2.5\%$), highlighting its robustness under extreme resource limitations. Collectively, these results corroborate \texttt{ZOIF}'s capacity to reliably identify high-influence samples and demonstrate its superior gain efficiency: under tight annotation or computational budgets, \texttt{ZOIF} delivers stronger performance per selected sample compared to naive baselines. The observed performance decay at $15.0\%$ further underscores the importance of adaptive budget calibration in practical deployment scenarios.

\subsection{Stability of Ranking Correlation (\texttt{ZOIF} vs. \texttt{FOIF})}\label{stabZOFO}

To assess the ranking stability of our approach, we conduct a comparative analysis between \texttt{ZOIF} and \texttt{FOIF} (used as the first-order reference) on Llama-3.2-1B using the SQuAD dataset. The evaluation employs two complementary metrics: (1) \textit{Top-$K$ Overlap}, which quantifies the agreement on the most influential samples, and (2) \textit{Spearman Rank Correlation at Intersection} ($\rho_{\cap}$), which measures the monotonic relationship within the co-selected Top-$K$ subset ($K=10{,}000$). The quantitative results are presented in Table~\ref{tab:correlation-analysis}.

\begin{table}[tb]
\centering
\begin{tabular}{lcc}
\hline
\textbf{Method} & \textbf{Top-$K$ Overlap} & \textbf{Spearman $\rho_{\cap}$} \\ \hline
\texttt{ZOIF\_dia} & 16.44\% & 0.3277 \\
\texttt{ZOIF\_hvp} & 19.74\% & 0.2725 \\
\texttt{ZOIF\_ide} & 27.41\% & 0.2871 \\
\texttt{ZOIF\_opt} & 19.72\% & 0.4096 \\ \hline
\end{tabular}
\caption{Consistency analysis: \texttt{ZOIF} vs. \texttt{FOIF} ($K=10{,}000$).}
\label{tab:correlation-analysis}
\end{table}

The results show measurable alignment between \texttt{ZOIF} and the reference \texttt{FOIF} rankings from both macroscopic and microscopic perspectives. Macroscopically, the Top-$K$ overlap spans $16.44\%$ to $27.41\%$, exceeding the expected random baseline of $\sim 5\%$ and indicating that \texttt{ZOIF} recovers part of the high-value sample regions identified by \texttt{FOIF}. Microscopically, the Spearman correlation peaks at $0.4096$ for the \texttt{ZOIF\_opt} variant, indicating moderate consistency in relative rankings despite the inherent noise of zeroth-order estimation. Together with the downstream results, this partial ranking agreement helps explain how \texttt{ZOIF} achieves performance comparable to \texttt{FOIF} while substantially reducing memory overhead.

\end{document}